\def\eqref#1{equation~\ref{#1}}
\def\1{\bm{1}}
\def\va{{\bm{a}}}
\def\vb{{\bm{b}}}
\def\vj{{\bm{j}}}
\def\vm{{\bm{m}}}
\def\vu{{\bm{u}}}
\def\vv{{\bm{v}}}
\def\vx{{\bm{x}}}
\def\vy{{\bm{y}}}
\def\vz{{\bm{z}}}
\def\mA{{\bm{A}}}
\def\mB{{\bm{B}}}
\def\mC{{\bm{C}}}
\def\mD{{\bm{D}}}
\def\mE{{\bm{E}}}
\def\mG{{\bm{G}}}
\def\mH{{\bm{H}}}
\def\mI{{\bm{I}}}
\def\mJ{{\bm{J}}}
\def\mM{{\bm{M}}}
\def\mQ{{\bm{Q}}}
\def\mR{{\bm{R}}}
\def\mU{{\bm{U}}}
\def\mW{{\bm{W}}}
\def\mX{{\bm{X}}}
\def\mZ{{\bm{Z}}}
\DeclareMathAlphabet{\mathsfit}{\encodingdefault}{\sfdefault}{m}{sl}
\SetMathAlphabet{\mathsfit}{bold}{\encodingdefault}{\sfdefault}{bx}{n}
\def\sI{{\mathbb{I}}}
\def\sR{{\mathbb{R}}}
\newtheorem{theorem}{Theorem}[section]
\newtheorem{definition}[theorem]{Definition}
\newtheorem{proposition}[theorem]{Proposition}
\newtheorem{assumption}[theorem]{Assumption}
\newtheorem{lemma}[theorem]{Lemma}
\newtheorem{corollary}[theorem]{Corollary}
\DeclareMathOperator{\tr}{tr}
\DeclareMathOperator{\diag}{\mathbf{diag}}
\DeclareMathOperator{\supp}{Supp}
\newcommand{\tp}{^{\intercal}}
\newcommand{\mOmega}{{\mathbf{\Omega}}}
\definecolor{bl}{RGB}{0,114,189}
\definecolor{reed}{RGB}{143,0,0}
\title{Random Matrix Theory Proves that \\ Deep Learning Representations of GAN-data Behave as Gaussian Mixtures}
\author{Mohamed El Amine Seddik$^{1,2}$, Cosme Louart$^{1,3}$,\\ Mohamed Tamaazousti$^{1}$, Romain Couillet$^{2,3}$
\\
$^{1}$\textit{CEA List}, $^{2}$\textit{Centralesupélec}, $^{3}$\textit{GIPSA-Lab}
}
\date{January 2020}
\begin{document}
\maketitle

\begin{abstract}
This paper shows that deep learning (DL) representations of data produced by generative adversarial nets (GANs) are random vectors which fall within the class of so-called \textit{concentrated} random vectors. Further exploiting the fact that Gram matrices, of the type $\mG = \mX\tp \mX$ with $\mX=[\vx_1,\ldots,\vx_n]\in \sR^{p\times n}$ and $\vx_i$ independent concentrated random vectors from a mixture model, behave asymptotically (as $n,p\to \infty$) as if the $\vx_i$ were drawn from a Gaussian mixture, suggests that DL representations of GAN-data can be fully described by their first two statistical moments for a wide range of standard classifiers. Our theoretical findings are validated by generating images with the BigGAN model and across different popular deep representation networks.
\end{abstract}

\section{Introduction}
The performance of machine learning methods depends strongly on the choice of the data representation (or features) on which they are applied. This data representation should ideally contain \textit{relevant information} about the learning task in order to achieve learning with \textit{simple} models and \textit{small} amount of samples. Deep neural networks~\cite{rumelhart1988learning} have particularly shown impressive results by automatically learning representations from raw data (\textit{e.g.}, images). However, due to the complex structure of deep learning models, the characterization of their hidden representations is still an open problem~\cite{bengio2009learning}.

Specifically, quantifying what makes a given deep learning representation better than another is a fundamental question in the field of \textit{Representation Learning}~\cite{6472238}. Relying on~\cite{montavon2011kernel} a data representation is said to be \textit{good} when it is possible to build \textit{simple} models on top of it that are \textit{accurate} for the given learning problem. \cite{montavon2011kernel} have notably quantified the layer-wise evolution of the representation in deep networks by computing the principal components of the Gram matrix $\mG_{\ell} = \{\phi_{\ell}(\vx_i)\tp \phi_{\ell}(\vx_j)\}_{i,j=1}^n$ at each layer for $n$ input data $\vx_1,\ldots,\vx_n$, where $\phi_{\ell}(\vx)$ is the representation of $\vx$ at layer $\ell$ of the given DL model, and the number of components controls the model simplicity. In their study, the impact of the representation at each layer is quantified through the prediction error of a linear predictor trained on the principal subspace of $\mG_{\ell}$.

Pursuing on this idea, given a certain representation model $\vx \mapsto \phi(\vx)$, we aim in this article at theoretically studying the large dimensional behavior, and in particular the spectral information (\textit{i.e.}, eigenvalues and dominant eigenvectors), of the corresponding Gram matrix $\mG = \{\phi(\vx_i)\tp \phi(\vx_j)\}_{i,j=1}^n$ in order to determine the information encoded (\textit{i.e.}, the sufficient statistics) by the representation model on a set of real data $\vx_1,\ldots,\vx_n$. Indeed, standard classification and regression algorithms --along with the last layer of a neural network~\cite{yeh2018representer}-- retrieve the data information directly from functionals or the eigenspectrum of $\mG$\footnote{For instance, spectral clustering uses the dominant eigenvectors of $\mG$, while support vector machines use functionals (quadratic forms) involving $\mG$.}. To this end, though, one needs a statistical model for the representations given the distribution of the raw data (\textit{e.g.}, images) which is generally unknown. Yet, due to recent advances in generative models since the advent of Generative Adversarial Nets~\cite{goodfellow2014generative}, it is now possible to generate complex data structures by applying successive \textit{Lipschitz} operations to Gaussian random vectors. In particular, GAN-data \emph{are} used in practice as substitutes of real data for data augmentation~\cite{antoniou2017data}. On the other hand, the fundamental concentration of measure phenomenon~\cite{ledoux2005concentration} tells us that Lipschitz-ally transformed Gaussian vectors satisfy a concentration property. Precisely, defining the class of \textit{concentrated} vectors $\vx \in E$ through concentration inequalities of $f(\vx)$, for any real Lipschitz observation $f:E\to \sR$, implies that deep learning representations of GAN-data fall within this class of random vectors, since the mapping $\vx \mapsto \phi(\vx)$ \textit{is} Lipschitz. Thus, GAN-data are concentrated random vectors and thus an appropriate statistical model of realistic data.

Targeting classification applications by assuming a mixture of concentrated random vectors model, this article studies the spectral behavior of Gram matrices $\mG$ in the large $n,p$ regime. Precisely, we show that these matrices have asymptotically (as $n,p\to  \infty$ with $p/n\to c<\infty$) the same first-order behavior as for a Gaussian Mixture Model (GMM). As a result, by generating images using the BigGAN model~\cite{brock2018large} and considering different commonly used deep representation models, we show that the spectral behavior of the Gram matrix computed on these representations is the same as on a GMM model with the same $p$-dimensional means and covariances. A surprising consequence is that, for GAN data, the aforementioned \textit{sufficient statistics} to characterize the quality of a given representation network are only the \textit{first} and \textit{second} order statistics of the representations. This behavior is shown by simulations to extend beyond random GAN-data to real images from the Imagenet dataset~\cite{deng2009imagenet}.

The rest of the paper is organized as follows. In Section~\ref{sec:2}, we introduce the notion of concentrated vectors and their main properties. Our main theoretical results are then provided in Section~\ref{sec:3}. In Section~\ref{sec:4} we present experimental results. Section~\ref{sec:5} concludes the article.

\textit{Notation:} In the following, we use the notation from~\cite{goodfellow2016deep}. $[n]$ denotes the set $\{1,\ldots,n\}$. Given a vector $\vx \in \sR^n$, the $\ell_2$-norm of $\vx$ is given as $\Vert \vx \Vert^2 = \sum_{i=1}^n \vx_i^2$. Given a $p\times n$ matrix $\mM$, its Frobenius norm is defined as $\Vert \mM \Vert_F^2 = \sum_{i=1}^p\sum_{j=1}^n \mM_{ij}^2$ and its spectral norm as $\Vert \mM \Vert =\sup_{\Vert \vx \Vert =1} \Vert \mM \vx \Vert$. $\odot$ for the Hadamard product. An application $\mathcal{F}:E\to F$ is said to be $\Vert \mathcal F \Vert_{lip}$-Lipschitz, if $\forall(\vx,\vy)\in E^2$, $\Vert \mathcal{F}(\vx)-\mathcal{F}(\vy)\Vert_F \leq \Vert \mathcal F \Vert_{lip}\cdot \Vert \vx-\vy\Vert_E$ and $\Vert \mathcal F \Vert_{lip}$ is finite.

\section{Basic notions of concentrated vectors}\label{sec:2}

Being the central tool of our study, we start by introducing the notion of concentrated vectors. While advanced concentration notions have been recently developed in~\cite{louart2019large} in order to specifically analyze the behavior of large dimensional sample covariance matrices, for simplicity, we restrict ourselves here to the sufficient so-called $q$-exponentially concentrated random vectors.

\begin{definition}[$q$-exponential concentration] Given a normed space $(E,\Vert \cdot \Vert_E)$ and a real $q$, a random vector $\vx\in E$ is said to be $q$-exponentially concentrated if for any $1$-Lipschitz real function $f:E\to \mathbb{R}$, there exists $C\geq 0$ independent of $\dim(E)$ and $\sigma>0$ such that for all $t\geq 0$
\begin{align}\label{eq:def_exp_conc}
	\mathbb{P} \left\lbrace \left\vert f(\vx) - \mathbb{E}f(\vx) \right\vert > t \right\rbrace \leq C\, e^{-(t/\sigma)^q}
\end{align}
which we denote $\vx\in \mathcal{E}_{q}(\sigma\,|\, E, \Vert \cdot \Vert_E)$. We simply write $\vx\in \mathcal{E}_{q}(1\,|\, E, \Vert \cdot \Vert_E)$ if the tail parameter $\sigma$ does not depend on $\dim(E)$, and $x\in \mathcal{E}_{q}(1)$ for $x$ a scalar real random variable.
\end{definition}

Therefore, concentrated vectors are defined through the concentration of any $1$-Lipschitz real scalar ``observation''. One of the most important examples of concentrated vectors are standard Gaussian vectors. Precisely, we have the following proposition. See \cite{ledoux2005concentration}) for more examples such as uniform and Gamma distribution.

\begin{proposition}[Gaussian vectors~\cite{ledoux2005concentration}]\label{prop:conc_gauss} Let $d\in \mathbb{N}$ and $\vx \sim \mathcal{N}(0,\mI_d)$. Then $\vx$ is a $2$-exponentially concentrated vector independently on the dimension $d$, \textit{i.e.\@} $\vx\in \mathcal{E}_{2}(1\,|\,\mathbb{R}^d, \Vert \cdot \Vert)$.
\end{proposition}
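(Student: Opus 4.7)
The plan is to establish the dimension-free Gaussian concentration inequality of Borell-Tsirelson-Ibragimov-Sudakov, of which Proposition~\ref{prop:conc_gauss} is a direct reformulation. The cleanest route goes through the Gaussian logarithmic Sobolev inequality combined with Herbst's argument. First, I would recall (or cite from \cite{ledoux2005concentration}) that the standard Gaussian measure $\gamma_d$ on $\sR^d$ satisfies, for every smooth $g:\sR^d \to \sR$,
\begin{equation*}
\mathrm{Ent}_{\gamma_d}(g^2) \;:=\; \mathbb{E}[g^2 \log g^2] - \mathbb{E}[g^2]\log \mathbb{E}[g^2] \;\leq\; 2\,\mathbb{E}\,\Vert \nabla g \Vert^2.
\end{equation*}
The crucial feature is that the constant $2$ is \emph{independent of $d$}; this follows from tensorization of the one-dimensional log-Sobolev inequality, itself provable via the Ornstein-Uhlenbeck semigroup or direct Hermite-expansion arguments.

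Next, assume $f:\sR^d \to \sR$ is $1$-Lipschitz and (temporarily) smooth, and set $h = f - \mathbb{E}f(\vx)$ and $H(\lambda) = \mathbb{E}[e^{\lambda h}]$. Applying the log-Sobolev inequality to $e^{\lambda h / 2}$ and using $\Vert \nabla h \Vert = \Vert \nabla f \Vert \leq 1$ yields the differential inequality
\begin{equation*}
\lambda H'(\lambda) - H(\lambda)\log H(\lambda) \;\leq\; \tfrac{\lambda^2}{2} H(\lambda).
\end{equation*}
Dividing through by $\lambda^2 H(\lambda)$ rewrites this as $\tfrac{d}{d\lambda}\bigl[\lambda^{-1}\log H(\lambda)\bigr] \leq \tfrac{1}{2}$, with $\lambda^{-1}\log H(\lambda) \to \mathbb{E} h = 0$ as $\lambda \to 0^+$. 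Integrating, one obtains the sub-Gaussian bound $\mathbb{E}[e^{\lambda h}] \leq e^{\lambda^2/2}$ for all $\lambda \geq 0$ (the case $\lambda < 0$ follows by applying the same to $-f$).

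Step three is the standard Chernoff-Markov optimization: $\mathbb{P}(h > t) \leq e^{-\lambda t + \lambda^2/2}$, minimized at $\lambda = t$, gives $\mathbb{P}(h > t)\leq e^{-t^2/2}$; symmetrizing yields $\mathbb{P}(\vert f(\vx)-\mathbb{E}f(\vx)\vert > t) \leq 2 e^{-t^2/2}$, which is exactly \eqref{eq:def_exp_conc} with $q=2$, $\sigma=\sqrt{2}$ and $C=2$, none of which depend on $d$. Finally, to drop the smoothness assumption on $f$, I would convolve with a small Gaussian mollifier (which preserves the Lipschitz constant), apply the above to the mollified function, and pass to the limit using dominated convergence.

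The only genuinely nontrivial step is the Gaussian log-Sobolev inequality itself; once it is granted, the rest of the proof is mechanical. The reason dimension-independence survives is entirely traceable to that step, since tensorization $\gamma_d = \gamma_1^{\otimes d}$ combined with the sub-additivity of entropy produces a log-Sobolev constant that does not grow with $d$. An alternative would be to invoke directly the Gaussian isoperimetric inequality of Borell and Sudakov-Tsirelson, but the log-Sobolev/Herbst route is more transparent about why the tail parameter $\sigma$ is dimension-free, which is the property actually needed in the sequel.
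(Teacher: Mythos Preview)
Your proof is correct and is precisely the log-Sobolev/Herbst route presented in \cite{ledoux2005concentration}. Note, however, that the paper does not actually prove this proposition: it is stated with a citation to Ledoux and used as a black box, so there is no ``paper's own proof'' to compare against beyond the reference itself.
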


Concentrated vectors have the interesting property of being stable by application of $\sR^d\to \sR^p$ vector-Lipschitz transformations. Indeed, Lipschitz-ally transformed concentrated vectors remain concentrated according to the following proposition.

\begin{proposition}[Lipschitz stability~\cite{louart2019large}]\label{prop:conc_lip} Let $\vx\in \mathcal{E}_{q}(1\,|\, E, \Vert \cdot \Vert_E)$ and $\mathcal{G}:E\to F$ a Lipschitz application with Lipschitz constant $\Vert \mathcal{G}\Vert_{lip}$ which may depend on $\dim(F)$. Then the concentration property on $\vx$ is transferred to $\mathcal{G}(\vx)$, precisely
\begin{align}\label{eq:lip_maps}
	\vx\in \mathcal{E}_{q}(1\,|\, E, \Vert \cdot \Vert_E)\,\,\, \Rightarrow \,\,\, \mathcal{G}(\vx) \in \mathcal{E}_{q}(\Vert \mathcal{G}\Vert_{lip}\,|\, F, \Vert \cdot \Vert_F).
\end{align}
\end{proposition}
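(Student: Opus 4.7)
The plan is to reduce concentration of a $1$-Lipschitz observation of $\mathcal{G}(\vx)$ to concentration of a Lipschitz observation of $\vx$, using the elementary fact that a composition of Lipschitz maps is Lipschitz with constant equal to the product of constants, and then rescale to put the resulting map in the $1$-Lipschitz form required by the hypothesis on $\vx$.

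Concretely, I would take an arbitrary $1$-Lipschitz function $f:F\to\sR$ and set $g = f\circ \mathcal{G}:E\to\sR$. For any $\vu,\vv\in E$,
\begin{equation*}
|g(\vu)-g(\vv)| = |f(\mathcal{G}(\vu))-f(\mathcal{G}(\vv))| \leq \Vert \mathcal{G}(\vu)-\mathcal{G}(\vv)\Vert_F \leq \Vert \mathcal{G}\Vert_{lip}\,\Vert \vu-\vv\Vert_E,
\end{equation*}
so $g$ is $\Vert\mathcal{G}\Vert_{lip}$-Lipschitz and hence $\tilde g := g/\Vert\mathcal{G}\Vert_{lip}$ is $1$-Lipschitz on $E$ (assuming $\Vert\mathcal{G}\Vert_{lip}>0$; the degenerate case $\Vert\mathcal{G}\Vert_{lip}=0$ is trivial since then $\mathcal{G}(\vx)$ is deterministic).

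Next I would apply the hypothesis $\vx\in \mathcal{E}_{q}(1\,|\,E,\Vert\cdot\Vert_E)$ to $\tilde g$: there exist $C\geq 0$ and $\sigma>0$ (independent of $\dim(E)$, with $\sigma$ not depending on $\dim(E)$ because of the ``$1$'' in the hypothesis) such that for all $s\geq 0$,
\begin{equation*}
\mathbb{P}\left\lbrace |\tilde g(\vx)-\mathbb{E}\tilde g(\vx)|>s\right\rbrace \leq C\,e^{-(s/\sigma)^q}.
\end{equation*}
Multiplying through by $\Vert\mathcal{G}\Vert_{lip}$ inside the absolute value and substituting $t = s\,\Vert\mathcal{G}\Vert_{lip}$ gives
\begin{equation*}
\mathbb{P}\left\lbrace |f(\mathcal{G}(\vx))-\mathbb{E}f(\mathcal{G}(\vx))|>t\right\rbrace \leq C\,e^{-(t/(\sigma\Vert\mathcal{G}\Vert_{lip}))^q},
\end{equation*}
which is exactly the $q$-exponential concentration inequality for $\mathcal{G}(\vx)$ with tail parameter $\sigma\Vert\mathcal{G}\Vert_{lip}$. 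Since $f$ was an arbitrary $1$-Lipschitz function on $F$, this establishes $\mathcal{G}(\vx)\in \mathcal{E}_{q}(\Vert\mathcal{G}\Vert_{lip}\,|\,F,\Vert\cdot\Vert_F)$.

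There is essentially no serious obstacle here: the result is a one-line rescaling once the correct composition is identified. The only subtlety worth flagging is a bookkeeping one concerning the tail parameter: the definition distinguishes a constant $\sigma$ that may in principle depend on $\dim(E)$ from the normalized case ``$\sigma=1$'', and the proof shows that the new parameter picks up exactly the factor $\Vert\mathcal{G}\Vert_{lip}$, which is allowed to depend on $\dim(F)$ (as stated in the proposition). Writing this carefully is the only real content.
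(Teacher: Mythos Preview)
Your proof is correct and follows essentially the same approach as the paper's own (commented-out) proof: take a $1$-Lipschitz $f:F\to\sR$, observe that $\frac{1}{\Vert\mathcal{G}\Vert_{lip}}\,f\circ\mathcal{G}$ is $1$-Lipschitz on $E$, apply the concentration hypothesis to this rescaled composition, and then change variables $t\mapsto \Vert\mathcal{G}\Vert_{lip}\cdot t$ to recover the desired tail bound with parameter $\sigma\Vert\mathcal{G}\Vert_{lip}$. Your additional remarks on the degenerate case $\Vert\mathcal{G}\Vert_{lip}=0$ and on the bookkeeping of which dimension the tail parameter may depend on are fine refinements that the paper omits.
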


Note importantly for the following that the Lipschitz constant of the transformation $\mathcal G$ must be controlled, in order to constrain the tail parameter of the obtained concentration.

In particular, we have the coming corollary to Proposition~\ref{prop:conc_lip} of central importance in the following.

\begin{corollary}\label{corollary} Let $\mathcal G_1, \ldots , \mathcal G_n : \sR^d \to \sR^p$ a set of $n$ Lipschitz applications with Lipschitz constants $\Vert \mathcal G_i \Vert_{lip}$. Let $\mathcal G : \sR^{d\times n} \to \sR^{p\times n}$ be defined for each $\mX\in \sR^{d\times n}$ as $\mathcal G(\mX) = [\mathcal G_1(\mX_{:, 1}),\ldots, \mathcal G_n(\mX_{:, n})]$. Then, 
\begin{align}
	\mZ \in \mathcal E_q (1\, | \, \sR^{d\times n}, \Vert \cdot \Vert_F) \,\,\, \Rightarrow \,\,\,
	\mathcal G (\mZ) \in \mathcal E_q \left(\sup_i \Vert \mathcal G_i \Vert_{lip}\, \mid \, \sR^{p\times n}, \Vert \cdot \Vert_F\right).
\end{align}	
\end{corollary}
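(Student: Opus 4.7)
The plan is to derive the corollary as a direct application of Proposition~\ref{prop:conc_lip}. Since concentration is preserved under Lipschitz maps, it suffices to verify that the stacked map $\mathcal G$ is itself Lipschitz from $(\sR^{d\times n}, \Vert \cdot \Vert_F)$ to $(\sR^{p\times n}, \Vert \cdot \Vert_F)$ with Lipschitz constant at most $\sup_i \Vert \mathcal G_i \Vert_{lip}$. Once this is established, the conclusion follows immediately by substituting $\mathcal G$ for the generic map in Proposition~\ref{prop:conc_lip} and using the hypothesis $\mZ \in \mathcal E_q(1 \mid \sR^{d\times n}, \Vert \cdot \Vert_F)$.

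To bound the Lipschitz constant, I would take two arbitrary matrices $\mX, \mY \in \sR^{d\times n}$ and exploit the fact that both the Frobenius norm on the domain and on the codomain decompose as a sum of squared column norms. This gives
\begin{align*}
\Vert \mathcal G(\mX) - \mathcal G(\mY) \Vert_F^2
&= \sum_{i=1}^{n} \Vert \mathcal G_i(\mX_{:,i}) - \mathcal G_i(\mY_{:,i}) \Vert^2 \\
&\leq \sum_{i=1}^{n} \Vert \mathcal G_i \Vert_{lip}^2 \, \Vert \mX_{:,i} - \mY_{:,i} \Vert^2 \\
&\leq \bigl(\sup_i \Vert \mathcal G_i \Vert_{lip}\bigr)^2 \, \Vert \mX - \mY \Vert_F^2,
\end{align*}
using the per-column Lipschitz bound on $\mathcal G_i$ in the middle step. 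Taking square roots yields the claimed Lipschitz constant.

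With this in hand, Proposition~\ref{prop:conc_lip} delivers $\mathcal G(\mZ) \in \mathcal E_q(\sup_i \Vert \mathcal G_i \Vert_{lip} \mid \sR^{p\times n}, \Vert \cdot \Vert_F)$, which is exactly the claim. There is no real obstacle here, as the result is essentially a bookkeeping exercise: the only subtlety is matching the norms properly, namely recognizing that the Frobenius norm is precisely the norm for which the independent column-wise action of the $\mathcal G_i$ aggregates cleanly via a Pythagorean identity, and that this is what allows the supremum (rather than, e.g., a sum) of Lipschitz constants to control the overall map.
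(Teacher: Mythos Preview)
Your proof is correct and essentially identical to the paper's: both reduce to Proposition~\ref{prop:conc_lip} after verifying that $\mathcal G$ is $\sup_i \Vert \mathcal G_i \Vert_{lip}$-Lipschitz in Frobenius norm via the column-wise decomposition $\Vert \mathcal G(\mX) - \mathcal G(\mY) \Vert_F^2 = \sum_i \Vert \mathcal G_i(\mX_{:,i}) - \mathcal G_i(\mY_{:,i}) \Vert^2$. The only cosmetic difference is that the paper writes the increment as $\mH = \mY - \mX$.
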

\begin{proof} This is a consequence of Proposition~\ref{prop:conc_lip} since the map $\mathcal G$ is $\sup_i \Vert \mathcal G_i \Vert_{lip}$-Lipschitz with respect to (w.r.t.\@) the Frobenius norm. Indeed, for $\mX,\mH\in \sR^{d\times n}$~: $\Vert \mathcal G( \mX + \mH) - \mathcal G( \mX ) \Vert_F^2 \leq \sum_{i=1}^n \Vert \mathcal G_i \Vert_{lip}^2 \cdot \Vert  \mH_{:,i}\Vert^2 \leq \sup_i \Vert \mathcal G_i \Vert_{lip}^2 \cdot \Vert  \mH\Vert_F^2$.
\end{proof}

\section{Main Results}\label{sec:3}
\subsection{GAN data: An Example of Concentrated Vectors}\label{sec:gan}

\begin{figure}[b!]
    \centering
    \includegraphics[width=1\linewidth]{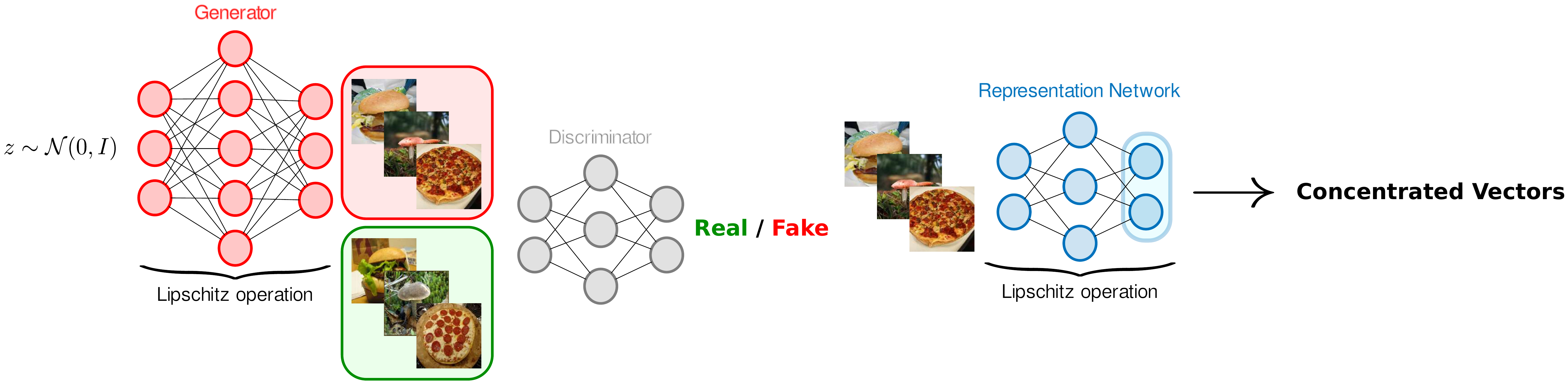}
    \caption{Deep learning representations of GAN-data are constructed by applying successive Lipschitz operations to Gaussian vectors, therefore they are \textit{concentrated} vectors by design, since Gaussian vectors are concentrated and thanks to the Lipschitz stability in Proposition~\ref{prop:conc_lip}.}\label{fig:overview}
\end{figure}
Concentrated random vectors are particularly interesting from a practical standpoint for real data modeling. In fact, unlike simple Gaussian vectors, the former do not suffer from the constraint of having independent entries which is quite a restrictive assumption when modeling real data such as images or their non-linear features (\textit{e.g.}, DL representations). The other modeling interest of concentrated vectors lies in their being already present in practice as alternatives to real data. Indeed, adversarial neural networks (GANs) have the ability nowadays to generate random {\it realistic} data (for instance realistic images) by applying successive Lipschitz operations to standard Gaussian vectors~\cite{goodfellow2014generative}. 

A GAN architecture involves two networks, a generator model which maps random Gaussian noise to new plausible synthetic data and a discriminator model which classifies real data as real (from the dataset) or fake (for the generated data). The discriminator is updated directly through a binary classification problem, whereas the generator is updated through the discriminator. As such, the two models are trained alternatively in an adversarial manner, where the generator seeks to better deceive the discriminator and the former seeks to better identify the fake data~\cite{goodfellow2014generative}.  

In particular, once both models are trained (when they reach a Nash equilibrium), DL representations of GAN-data --and GAN-data themselves-- are schematically constructed in practice as follows:
\begin{align}\label{eq:1}
	\text{Real Data} \approx \text{GAN Data} = \mathcal{F}_N \circ \cdots \circ \mathcal{F}_1(\vz), \text{ where } \vz\sim \mathcal{N}(0,\mI_d),
\end{align}
where $d$ stands for the input dimension of the generator model, $N$ the number of layers, and the $\mathcal{F}_i$'s either Fully Connected Layers, Convolutional Layers, Pooling Layers, Up-sampling Layers and Activation Functions, Residual Layers or Batch Normalizations. All these operations happen to be \textit{Lipschitz} applications. Precisely,

$\bullet$ \textbf{Fully Connected Layers and Convolutional Layers:} These are affine operations which can be expressed as
$$ \mathcal{F}_i(\vx) = \mW_i \vx + \vb_i, \text{ for } \mW_i \text{ the weight matrix and }\vb_i \text{ the bias vector.} $$
Here the Lipschitz constant is the operator norm (the largest singular value) of the weight matrix $\mW_i$, that is $\Vert \mathcal{F}_i \Vert_{lip} = \sup_{\vu\neq 0} \frac{\Vert \mW_i \vu \Vert_2}{\Vert \vu \Vert_2}$.

$\bullet$ \textbf{Pooling Layers and Activation Functions:} Most commonly used activation functions and pooling operations are
$$\text{ReLU}(\vx) = \max(0,\vx),\,\, \text{MaxPooling}(\vx)=[\max(\vx_{\mathcal{S}_1}),\ldots, \max(\vx_{\mathcal{S}_q})]\tp,$$
where $\mathcal S_i$'s are patches (\textit{i.e.}, subsets of $[\dim(\vx)]$). These are at most $1$-Lipschitz operations with respect to the Frobenius norm. Specifically, the maximum absolute sub-gradient of the ReLU activation function is $1$, thus the ReLU operation has a Lipschitz constant of $1$. Similarly, we can show that the Lipschitz constant of MaxPooling layers is also $1$. 

$\bullet$ \textbf{Residual Connections:} Residual layers act the following way
$$ \mathcal{F}_i(\vx) = \vx + \mathcal{F}_i^{(1)}\circ \cdots \circ \mathcal{F}_i^{(\ell)}(\vx), $$
where the $\mathcal{F}_i^{(j)}$'s are Fully Connected Layers or Convolutional Layers with Activation Functions, and which are Lipschitz operations. Thus $\mathcal{F}_i$ is a Lipschitz operation with Lipschitz constant bounded by $1 + \prod_{j=1}^{\ell} \Vert \mathcal{F}_i^{(j)} \Vert_{lip}$.

$\bullet$ \textbf{Batch Normalization (BN) Layers:} They consist in statistically standardizing~\cite{ioffe2015batch} the vectors of a small batch $\mathcal{B} = \{\vx_i\}_{i=1}^b\subset \sR^d $ as follows: for each $\vx_k\in \mathcal{B}$
$$ \mathcal{F}_i(\vx_k) = \diag\left( \frac{ \mathbf \va }{\sqrt{\sigma_{\mathcal{B}}^2 + \epsilon}} \right) (\vx_k - \mu_{\mathcal{B}} \1_d) + \mathbf \vb $$
where $\mu_{\mathcal{B}} = \frac{1}{db} \sum_{k=1}^b \sum_{i=1}^d [\vx_k]_i$, $\sigma_{\mathcal{B}}^2 = \frac{1}{db} \sum_{k=1}^b\sum_{i=1}^d ([\vx_k]_i - \mu_{\mathcal{B}})^2$, $\va, \vb\in \sR^d$ are parameters to be learned and $\diag(\vv)$ transforms a vector $\vv$ to a diagonal matrix with its diagonal entries being those of $\vv$. Thus BN is a Lipschitz transformation with Lipschitz constant $\Vert \mathcal{F}_i \Vert_{lip} = \sup_i \vert \frac{\mathbf \va_i}{\sqrt{\sigma_{\mathcal{B}}^2 + \epsilon}} \vert$.

Therefore, as illustrated in Figure~\ref{fig:overview}, since standard Gaussian vectors are concentrated vectors as mentioned in Proposition~\ref{prop:conc_gauss} and since the notion of concentrated vectors is stable by Lipschitz transformations thanks to Proposition~\ref{prop:conc_lip}, GAN-data (and their DL representations) are concentrated vectors by design given the construction in Equation~(\ref{eq:1}). Moreover, in order to generate data belonging to a specific class, Conditional GANs have been introduced~\cite{mirza2014conditional}; once again data generated by these models are concentrated vectors as a consequence of Corollary~\ref{corollary}. Indeed, a generator of a Conditional GAN model can be seen as a set of multiple generators where each generates data of a specific class conditionally on the class label (\textit{e.g.}, BigGAN model~\cite{brock2018large}).

Yet, in order to ensure that the resulting Lipschitz constant of the combination of the above operations does not scale with the network or data size, so to maintain good concentration behaviors, a careful control of the learned network parameters is needed. This control happens to be already considered in practice in order to ensure the stability of GANs during the learning phase, notably to generate realistic and high-resolution images~\cite{NIPS2017_6797,brock2018large}. The control of the Lipschitz constant of representation networks is also needed in practice in order to make them robust against adversarial examples~\cite{szegedy2013intriguing,NIPS2017_7159}. This control is particularly ensured through spectral normalization of the affine layers~\cite{brock2018large}, such as Fully Connected Layers, Convolutional Layers and Batch Normalization. Indeed, spectral normalization~\cite{miyato2018spectral} consists in applying the operation $\mW \leftarrow \mW / \sigma_1(\mW)$ to the affine layers at each backward iteration of the back-propagation algorithm, where $\sigma_1(\mW)$ stands for the largest singular value of the weight matrix $\mW$. \cite{brock2018large}, have notably observed that, without spectral constraints, a subset of the generator layers grow throughout their GAN training and explode at collapse. They thus suggested the following spectral normalization --which happens to be less restrictive than the standard spectral normalization $\mW \leftarrow \mW / \sigma_1(\mW)$~\cite{miyato2018spectral}-- to the affine layers:
\begin{align}\label{eq:SNlinear}
	\mW \leftarrow \mW - (\sigma_1(\mW) - \sigma_*)\, \vu_1(\mW) \vv_1(\mW)^{\intercal}
\end{align}
where $\vu_1(\mW)$ and $\vv_1(\mW)$ denote respectively the left and right largest singular vectors of $\mW$, and $\sigma_*$ is an hyper-parameter fixed during training.

To get an insight about the influence of this operation and to ensure that it controls the Lipschitz constant of the generator, the following proposition provides the dynamics of a random walk in the space of parameters along with the spectral normalization in Equation~(\ref{eq:SNlinear}). Indeed, since stochastic gradient descent (SGD) consists in estimating the gradient of the loss function on randomly selected batches of data, it can be assimilated to a random walk in the space of parameters~\cite{antognini2018pca}.

\input{figure_sigma}

\begin{proposition}[Lipschitz constant control]\label{pro:lipschitz_norm_nn}
	Let $\sigma_* >0$ and $\mathcal{G}$ be a neural network composed of $N$ affine layers, each one of input dimension $d_{i-1}$ and output dimension $d_i$ for $i\in [N]$, with $1$-Lipschitz activation functions. Assume that the weights of $\mathcal{G}$ at layer $i+1$ are initialized as $\mathcal{U}([-\frac{1}{\sqrt{d_{i}}},\frac{1}{\sqrt{d_{i}}}])$, and consider the following dynamics with learning rate $\eta$:
	\begin{equation}\begin{split}
		&\mW \leftarrow \mW - \eta \mE,\text{ with } \mE_{i,j}\sim \mathcal{N}(0,1) \\
		&\mW \leftarrow \mW - \max(0, \sigma_1(\mW) - \sigma_*)\, \vu_1(\mW) \vv_1(\mW)^{\intercal}.
	\end{split}\end{equation}
	Then, $\forall\varepsilon >0$, the Lipschitz constant of $\mathcal{G}$ is bounded at convergence with high probability as:
	\begin{align}\label{eq:Lipschitz_bound}
		\Vert \mathcal{G} \Vert_{lip} \leq \prod_{i=1}^N \left( \varepsilon + \sqrt{\sigma_{*}^2 + \eta^2 d_i d_{i-1}}\right).
	\end{align}
\end{proposition}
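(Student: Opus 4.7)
The plan is to reduce the global Lipschitz bound to a per-layer spectral-norm control, then to analyse the stationary behaviour of $\sigma_1(\mW)$ under the two-step dynamics, and finally to combine the per-layer estimates. Since each $\mathcal{F}_i$ is an affine map $\vx \mapsto \mW_i \vx + \vb_i$ composed with a $1$-Lipschitz activation, the Lipschitz constant of $\mathcal{F}_i$ equals $\sigma_1(\mW_i)$, and composition gives $\Vert \mathcal{G} \Vert_{lip} \leq \prod_{i=1}^N \sigma_1(\mW_i)$. It therefore suffices to control $\sigma_1(\mW_i)$ for each layer separately at convergence of the random walk.

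For a single layer, I would isolate one iteration of the chain: right after the truncation substep one has $\Vert \mW \Vert_{op} \leq \sigma_*$, so that after the subsequent noise substep the matrix becomes $\mW' = \mW - \eta \mE$, with $\mE$ an independent standard Gaussian matrix of shape $d_i \times d_{i-1}$. The idea is to combine the Frobenius upper bound $\sigma_1(\mW')^2 \leq \Vert \mW' \Vert_F^2$ with the expansion
\begin{equation*}
\Vert \mW' \Vert_F^2 \;=\; \Vert \mW \Vert_F^2 + \eta^2 \Vert \mE \Vert_F^2 - 2\eta \langle \mW, \mE \rangle .
\end{equation*}
Classical $\chi^2$ tail bounds give $\Vert \mE \Vert_F^2 = d_i d_{i-1} + \mathcal{O}(\sqrt{d_i d_{i-1}})$ with high probability, and the cross term is centred and sub-Gaussian with variance proportional to $\Vert \mW \Vert_F^2$; both are absorbed into the $\varepsilon$-slack of~\eqref{eq:Lipschitz_bound}.

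The hard part is to argue that at stationarity $\Vert \mW \Vert_F^2$ is essentially $\sigma_*^2$, rather than the crude $\min(d_i,d_{i-1})\,\sigma_*^2$ one would obtain from $\sigma_j(\mW) \leq \sigma_1(\mW) \leq \sigma_*$. This would follow if the iterated truncation along the evolving top singular direction kept redirecting the Frobenius energy of $\mW$ into the clipped rank-one component $\sigma_* \vu_1(\mW)\vv_1(\mW)^{\intercal}$, preventing the lower singular values from diffusing away from zero. Making this rigorous amounts to characterising the invariant measure of the Markov chain $\mW \mapsto \tau_{\sigma_*}(\mW - \eta \mE)$, where $\tau_{\sigma_*}$ is the rank-one spectral truncation of~\eqref{eq:SNlinear}; this is the non-trivial step. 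Figure~\ref{fig:2} supports the picture, the empirical stationary value of $\sigma_1$ saturating exactly at $\sqrt{\sigma_*^2 + \eta^2 d_i d_{i-1}}$ and clearly below the loose triangle-inequality bound $\sigma_* + \eta\sqrt{d_i d_{i-1}}$.

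Conditioning on the high-probability events of the previous step for each of the $N$ layers and using a union bound over the independent Gaussian perturbations, the per-layer bounds combine multiplicatively to yield~\eqref{eq:Lipschitz_bound}, with the $\varepsilon$-slack carrying the total fluctuation across layers.
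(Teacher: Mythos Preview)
Your reduction to a per-layer spectral bound is correct, and you have correctly located the difficulty. But the route you propose---bounding $\sigma_1(\mW')^2 \leq \Vert \mW'\Vert_F^2$ and then arguing that $\Vert \mW\Vert_F^2 \approx \sigma_*^2$ at stationarity---will not close. The claim $\Vert \mW\Vert_F^2 \approx \sigma_*^2$ would force the stationary matrix to be essentially rank one, and this is false: the isotropic Gaussian noise feeds all singular directions equally, while the truncation step clips only the current top one. At stationarity one should expect the bulk of the singular values to sit near $\sigma_*$, so that $\Vert \mW\Vert_F^2$ is of order $\min(d_i,d_{i-1})\,\sigma_*^2$, which is exactly the ``crude'' estimate you were trying to avoid. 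Your own honest remark that characterising the invariant measure is ``the non-trivial step'' is really a signal that this is the wrong invariant to track. (Incidentally, the claim ``right after the truncation substep one has $\Vert \mW\Vert_{op}\leq \sigma_*$'' is also not justified: only the top singular value is clipped to $\sigma_*$, so the post-truncation operator norm is $\max(\sigma_*,\sigma_2(\bar \mW))$.)

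The paper's argument bypasses all of this by tracking the \emph{increment} of the Frobenius norm rather than its level. Writing $\bar\mW_t=\mW_t-\eta\mE_t$ and $\mW_{t+1}$ for the result of the truncation, one has the exact identity $\Vert \mW_{t+1}\Vert_F^2 = \Vert \bar\mW_t\Vert_F^2 - (\bar\sigma_{1,t}^2-\sigma_*^2)$ (the truncation removes precisely $\bar\sigma_{1,t}^2-\sigma_*^2$ from the sum of squared singular values), together with $\Vert \bar\mW_t\Vert_F^2 \leq \Vert \mW_t\Vert_F^2 + \eta^2 d_i d_{i-1}$ with high probability. Whenever $\Vert \mW_{t+1}\Vert_F \geq \Vert \mW_t\Vert_F$, these two combine to give $\bar\sigma_{1,t}^2 \leq \sigma_*^2 + \eta^2 d_i d_{i-1}$ directly, and since $\Vert \mW_{t+1}\Vert \leq \bar\sigma_{1,t}$ the per-layer bound follows. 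If instead the Frobenius norm is eventually monotone decreasing, a telescoping sum of the same two relations forces $\Vert \mW_t\Vert^2 - (\sigma_*^2+\eta^2 d_i d_{i-1}) \to 0$. The point is that the Frobenius norm can be arbitrarily large at stationarity; what matters is that its one-step change balances the noise injection against the spectral clipping, and that balance pins down $\sigma_1$ exactly.
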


\begin{proof}
	The proof is provided in Appendix~\ref{proof:2}.
\end{proof}

Proposition~\ref{pro:lipschitz_norm_nn} shows that the Lipschitz constant of a neural network is controlled when trained with the spectral normalization in Equation~(\ref{eq:SNlinear}). In particular, recalling the notations in Proposition~\ref{pro:lipschitz_norm_nn}, in the limit where $d_i\to \infty$ with $\frac{d_i}{d_{i-1}} \to \gamma_i \in (0, \infty)$ for all $i\in [N]$ and choosing the learning rate $\eta = \mathcal O(d_0^{-1})$, the Lipschitz constant of $\mathcal G$ is of order $\mathcal O(1)$ if it has finitely many layers $N$ and $\sigma_*$ is constant. Therefore, with this spectral normalization, it can be assumed that $\Vert \mathcal G \Vert_{lip} = \mathcal O(1)$ when dimensions grow. Figure~\ref{fig:2} depicts the behavior of the Lipschitz constant of a linear layer with and without spectral normalization in the setting of Proposition~\ref{pro:lipschitz_norm_nn}, which confirms the obtained bound.

\subsection{Mixture of Concentrated Vectors}\label{sec:mixture}
In this section, we assume data to be a mixture of concentrated random vectors with controlled $\mathcal O (1)$ Lipschitz constant (\textit{e.g.}, DL representations of GAN-data as we discussed in the previous section). Precisely, let $\vx_1,\ldots,\vx_n$ be a set of mutually independent random vectors in $\mathbb{R}^p$. We suppose that these vectors are distributed as one of $k$ classes of distribution laws $\mu_1,\ldots,\mu_k$ with distinct means $\{\vm_{\ell}\}_{\ell=1}^k$ and ``covariances'' $\{\mC_{\ell}\}_{\ell=1}^k$ defined receptively as 
\begin{align}\label{eq:means_covs}
	\vm_{\ell} = \mathbb{E}_{\vx_i\sim \mu_{\ell}}[\vx_i], \,\, \mC_{\ell} = \mathbb{E}_{\vx_i\sim \mu_{\ell}}[\vx_i \vx_i^{\intercal}].
\end{align}
For some $q>0$, we consider a $q$-exponential concentration property on the laws $\mu_{\ell}$, in the sense that for any family of independent vectors $\vy_1,\ldots,\vy_s$ sampled from $\mu_{\ell}$, $ [\vy_1,\ldots,\vy_s] \in \mathcal{E}_{q}(1\, | \, \mathbb{R}^{p\times s}, \Vert \cdot \Vert_{F})$. Without loss of generality, we arrange the $\vx_i$'s in a data matrix $\mX=[\vx_1,\ldots, \vx_n]$ such that, for each $\ell \in [k],\, \vx_{1 + \sum_{j=1}^{\ell - 1} n_j}, \ldots, \vx_{\sum_{j=1}^{\ell} n_j} \sim \mu_{\ell}$, where $n_{\ell}$ stands for the number of $\vx_i$'s sampled from $\mu_{\ell}$. In particular, we have the concentration of $\mX$ as
\begin{align}\label{eq:concX}
 \mX \in \mathcal{E}_{q}(1\, | \, \mathbb{R}^{p\times n}, \Vert \cdot \Vert_{F}).	
 \end{align}
Such a data matrix $\mX$ can be constructed through Lipschitz-ally transformed Gaussian vectors ($q=2$), with controlled Lipschitz constant, thanks to Corollary~\ref{corollary}. In particular, DL representations of GAN-data are constructed as such, as shown in Section~\ref{sec:gan}. We further introduce the following notations that will be used subsequently.
$$ \mM = [\vm_1,\ldots,\vm_{k}] \in \mathbb{R}^{p \times k},\, \mJ = [\vj_1, \ldots, \vj_k] \in \mathbb{R}^{n\times k} \text{ and } \mZ = [\vz_1,\ldots,\vz_n]\in \mathbb{R}^{p\times n}, $$
where $\vj_{\ell}\in \mathbb{R}^n$ stands for the canonical vector selecting the $\vx_i$'s of distribution $\mu_{\ell}$, defined by $(\vj_{\ell})_i = \1_{\vx_i\sim \mu_{\ell}}$, and the $\vz_i$'s are the centered versions of the $\vx_i$'s, \textit{i.e.\@} $\vz_i = \vx_i - \vm_{\ell}$ for $\vx_i\sim \mu_{\ell}$.

\subsection{Gram Matrices of Concentrated Vectors}
Now we study the behavior of the Gram matrix $\mG = \frac{1}{p}\mX^{\intercal}\mX$ in the large $n,p$ limit and under the model of the previous section. Indeed, $\mG$ appears as a central component in many classification, regression and clustering methods. Precisely, a finer description of the behavior of $\mG$ provides access to the internal functioning and performance evaluation of a wide range of machine learning methods such as Least Squares SVMs~\cite{ak2002least}, Semi-supervised Learning~\cite{chapelle2009semi} and Spectral Clustering~\cite{ng2002spectral}. Indeed, the performance evaluation of these methods has already been studied under GMM models in~\cite{liao2017random,mai2017random,couillet2016kernel} through RMT. On the other hand, analyzing the spectral behavior of $\mG$ for DL representations quantifies their quality --through its principal subspace~\cite{montavon2011kernel}-- as we have discussed in the introduction. In particular, the Gram matrix decomposes as
\begin{align}\label{eq:gram}
	\mG  = \frac1p \mJ \mM^{\intercal}\mM \mJ^{\intercal} + \frac1p \mZ^{\intercal}\mZ + \frac1p (\mJ \mM\tp \mZ + \mZ\tp \mM \mJ\tp ).
\end{align}
Intuitively $\mG$ decomposes as a low-rank informative matrix containing the class canonical vectors through $\mJ$ and a noise term represented by the other matrices and essentially $\mZ^{\intercal} \mZ$. Given the form of this decomposition, RMT predicts --through an analysis of the spectrum of $\mG$ and under a GMM model~\cite{benaych2016spectral}-- the existence of a threshold $\xi$ function of the ratio $p/n$ and the data statistics for which the dominant eigenvectors of $\mG$ contain information about the classes only when $\Vert \mM^{\intercal} \mM\Vert \geq \xi$ asymptotically (\textit{i.e.}, only when the means of the different classes are sufficiently distinct). 

In order to characterize the spectral behavior (\textit{i.e.}, eigenvalues and leading eigenvectors) of $\mG$ under the concentration assumption in Equation~(\ref{eq:concX}) on $\mX$, we will be interested in determining the spectral distribution $L = \frac{1}{n}\sum_{i=1}^n \delta_{\lambda_i}$ of $\mG$, with $\lambda_1,\ldots,\lambda_n$ the eigenvalues of $\mG$, where $\delta_x$ stands for the Dirac measure at point $x$. Essentially, to determine the limiting eigenvalue distribution as $p,n\to \infty$ and $p/n\to c\in (0,\infty)$, a conventional approach in RMT consists in determining an estimate of the \textit{Stieltjes transform}~\cite{silverstein1995analysis} $m_L$ of $L$, which is defined for some $z\in \mathbb{C}\setminus \supp(L)$
\begin{align}
    m_L(z)  = \int_{\lambda} \frac{dL(\lambda)}{\lambda - z} = \frac{1}{n} \tr \left( (\mG - z \mI_n)^{-1} \right).
\end{align}
Hence, quantifying the behavior of the \textit{resolvent} of $\mG$ defined as $\mR(z) = (\mG  + z \mI_n)^{-1}$ determines the limiting measure of $L$ through $m_L(z)$. Furthermore, since $\mR(z)$ and $\mG$ share the same eigenvectors with associated eigenvalues $\frac{1}{\lambda_i - z}$, the projector matrix corresponding to the top $m$ eigenvectors $\mU = [\vu_1,\ldots,\vu_m]$ of $\mG$ can be calculated through a Cauchy integral $\mU\mU\tp = \frac{1}{2\pi i} \oint_{\gamma} \mR(-z)dz$ where $\gamma$ is an oriented complex contour surrounding the top $m$ eigenvalues of $\mG$.

To study the behavior of $\mR(z)$, we look for a so-called \textit{deterministic equivalent}~\cite{hachem2007deterministic} $\tilde \mR(z)$ for $\mR(z)$, which is a deterministic matrix that satisfies for all $\mA \in \sR^{n\times n}$ and all $\vu, \vv\in \sR^n$ of respectively bounded spectral and Eucildean norms, $\frac1n \tr (\mA \mR(z)) - \frac1n \tr (\mA \tilde\mR(z)) \to 0$ and $\vu\tp (\mR(z) - \tilde\mR(z))\vv \to 0$ almost surely as $n\to \infty$. In the following, we present our main result which gives such a deterministic equivalent under the concentration assumption on $\mX$ in Equation~(\ref{eq:concX}) and under the following assumptions.

\begin{assumption}\label{assum:1}
    As $p\to \infty$,
    \begin{align*}
	\text{1. $p/n\to c \in (0,\infty)$,}&
	& \text{2. The number of classes $k$ is bounded,}&
	& \text{3. $\Vert \vm_{\ell} \Vert = \mathcal{O}(\sqrt{p}).$}
    \end{align*}
\end{assumption}

\begin{theorem}[Deterministic Equivalent for $\mR(z)$]\label{thm:main_thm}
Under the model described in Section~\ref{sec:mixture} and Assumptions~\ref{assum:1}, we have $\mR(z) \in \mathcal{E}_q( p^{-1/2}\,| \, \sR^{n\times n}, \Vert \cdot \Vert_F )$. Furthermore,
\begin{align}\label{eq:det_equiv}
	\left\Vert \mathbb{E} \mR(z) -  \tilde{\mR}(z) \right\Vert = \mathcal{O}\left( \sqrt{\frac{\log(p)}{p}} \right)\text{ , }
	\tilde{\mR}(z) = \frac{1}{z} \diag\left\lbrace \frac{\mI_{n_{\ell}}}{1 + \delta_{\ell}^*(z)} \right\rbrace_{\ell = 1}^k + \frac{1}{p\,z} \mJ \mOmega_z \mJ^{\intercal}
\end{align}
with $\mOmega_z = \mM\tp \tilde{\mQ}(z) \mM\odot \diag\left\lbrace \frac{\delta_{\ell}^*(z)-1}{\delta_{\ell}^*(z)+1} \right\rbrace_{\ell=1}^k $ and
$ \tilde{\mQ}(z) = \left( \frac{1}{c\, k} \sum_{\ell = 1}^k \frac{\mC_{\ell}}{1 + \delta_{\ell}^*(z)} + z \mI_p \right)^{-1} $,

where $\delta^*(z) = [\delta_1^*(z),\ldots,\delta_k^*(z)]^{\intercal}$ is the unique fixed point of the system of equations
$$ \delta_{\ell}(z) = \frac{1}{p}\tr \left( \mC_{\ell} \left( \frac{1}{c\, k} \sum_{j = 1}^k \frac{\mC_{j}}{1 + \delta_{j}(z)} + z \mI_p \right)^{-1} \right)\text{ for each }\ell \in [k].  $$ 	
\end{theorem}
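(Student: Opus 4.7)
My plan proceeds in three stages. First, I would establish the concentration claim $\mR(z) \in \mathcal{E}_q(p^{-1/2} \mid \sR^{n\times n}, \Vert\cdot\Vert_F)$ by applying Proposition~\ref{prop:conc_lip} to the map $\mX \mapsto \mR(z)$. The resolvent identity $\mR - \mR' = \tfrac{1}{p}\mR[(\mX')^\intercal\mX' - \mX^\intercal\mX]\mR'$ shows this map is Lipschitz with constant $O(\Vert\mX\Vert \cdot \Vert\mR(z)\Vert^2 / p)$. Since $\Vert\mR(z)\Vert \leq 1/|z|$ on the working contour and $\Vert\mX\Vert = O(\sqrt{p})$ on an event of overwhelming probability (itself obtained by applying the concentration of $\mX$ to the $1$-Lipschitz functional $\Vert\cdot\Vert$), a standard truncation argument then delivers the stated $p^{-1/2}$ tail parameter.

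Second, and this is the crux, I would derive the deterministic equivalent through the $p \times p$ companion resolvent $\mQ(z) = (\tfrac{1}{p}\mX\mX^\intercal + z\mI_p)^{-1}$. The push-through identity $\mX^\intercal\mQ = \mR\mX^\intercal$ yields the reformulation $\mR(z) = \tfrac{1}{z}(\mI_n - \tfrac{1}{p}\mX^\intercal\mQ(z)\mX)$, reducing the problem to (a) a deterministic equivalent for $\mathbb{E}\mQ(z)$ and (b) control of bilinear forms $\vx_i^\intercal\mQ\vx_j$. For (a), I would run the classical leave-one-out/Sherman--Morrison scheme: the rank-one formula $\mQ\vx_i = \mQ_{-i}\vx_i/(1 + \tfrac{1}{p}\vx_i^\intercal\mQ_{-i}\vx_i)$, combined with the replacement lemma that for $\vx_i \sim \mu_\ell$ and $\mA$ independent of $\vx_i$ one has $\tfrac{1}{p}\vx_i^\intercal\mA\vx_i = \tfrac{1}{p}\tr(\mA\mC_\ell) + O(\Vert\mA\Vert/\sqrt{p})$, identifies $\delta_\ell^*(z) \approx \tfrac{1}{p}\tr(\mC_\ell\mathbb{E}\mQ(z))$ and yields the fixed point $\tilde{\mQ}(z) = (\sum_\ell \tfrac{n_\ell}{p}\tfrac{\mC_\ell}{1+\delta_\ell^*} + z\mI_p)^{-1}$, matching the $\tfrac{1}{ck}$-weighted form under balanced classes. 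Uniqueness of $\delta^*(z)$ follows from standard Stieltjes-transform monotonicity/contraction, as for the Marchenko--Pastur equation.

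Third, for (b) I would substitute $\mX = \mM\mJ^\intercal + \mZ$, splitting $\tfrac{1}{p}\mX^\intercal\mQ\mX$ into a signal block $\tfrac{1}{p}\mJ\mM^\intercal\mQ\mM\mJ^\intercal$, two vanishing cross-terms, and a noise block $\tfrac{1}{p}\mZ^\intercal\mQ\mZ$. The diagonal of $\tfrac{1}{p}\mX^\intercal\mQ\mX$ produces the $\diag\{1/(1+\delta_\ell^*)\}/z$ contribution to $\tilde{\mR}$ via $\tfrac{1}{p}\vx_i^\intercal\mQ\vx_i = \alpha_i/(1+\alpha_i) \approx \delta_{\ell_i}^*/(1+\delta_{\ell_i}^*)$ with $\alpha_i = \tfrac{1}{p}\vx_i^\intercal\mQ_{-i}\vx_i$, and the Sherman--Morrison feedback of the means through the leave-one-out denominators produces the Hadamard correction encoded in $\mOmega_z$. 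Assembling per-entry concentration bounds via a union bound over a polynomial $\varepsilon$-net of the unit sphere in $\sR^n$ yields the spectral-norm error $\sqrt{\log p/p}$ in \eqref{eq:det_equiv}.

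The main obstacle I expect is the non-Gaussianity: Stein-type identities are unavailable, so the replacement lemma must be derived purely from Lipschitz concentration. The map $\vz \mapsto \vz^\intercal\mA\vz$ is only locally Lipschitz, with effective constant $O(\Vert\mA\Vert \cdot \Vert\vz\Vert)$; I would therefore truncate to the event $\{\Vert\vz\Vert = O(\sqrt{p})\}$ and apply Proposition~\ref{prop:conc_lip} with the inflated constant $O(\Vert\mA\Vert\sqrt{p})$, which is exactly what inflates the per-entry $1/\sqrt{p}$ control to the $\sqrt{\log p/p}$ appearing in the spectral-norm bound once the small-probability truncation events are handled by a union bound propagated consistently through the entire cascade of leave-one-out reductions.
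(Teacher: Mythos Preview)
Your overall architecture matches the paper's proof: relate $\mR$ to the $p\times p$ resolvent via $\mR(z)=\tfrac1z(\mI_n-\tfrac1p\mX^\intercal\mQ(z)\mX)$, import the deterministic equivalent $\tilde\mQ(z)$ for $\mQ(z)$ from \cite{louart2019large}, and control the bilinear forms $\vu^\intercal\mX^\intercal\mQ\mX\vv$ by leave-one-out and quadratic-form concentration. Two points where the paper is cleaner than your plan are worth noting.

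\textbf{Concentration of $\mR(z)$.} You propose to bound the Lipschitz constant of $\mX\mapsto\mR(z)$ by $O(\Vert\mX\Vert\Vert\mR\Vert^2/p)$, then restrict to the high-probability event $\{\Vert\mX\Vert=O(\sqrt p)\}$ and truncate. This works but is unnecessary: the paper observes that $\Phi(\mY)=(\mY^\intercal\mY+z\mI_n)^{-1}$ is \emph{globally} $2z^{-3/2}$-Lipschitz in Frobenius norm, because $\Vert\Phi(\mY)\mY^\intercal\Vert\le z^{-1/2}$ holds deterministically for every $\mY$ (immediate from the SVD: the singular values of $\Phi(\mY)\mY^\intercal$ are $\sigma_i/(\sigma_i^2+z)\le 1/(2\sqrt z)$). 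Applying this to $\mY=\mX/\sqrt p$ gives the $p^{-1/2}$ tail parameter directly from Proposition~\ref{prop:conc_lip}, with no truncation event to carry through the argument.

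\textbf{Spectral-norm error.} Your last paragraph proposes a union bound over an $\varepsilon$-net of the unit sphere to pass from per-entry to spectral-norm control. But the target $\mathbb E\mR(z)-\tilde\mR(z)$ is a \emph{deterministic} matrix, so there is no randomness to union-bound over; it suffices to show $\vert\vu^\intercal(\mathbb E\mR-\tilde\mR)\vv\vert=O(\sqrt{\log p/p})$ for arbitrary fixed unit $\vu,\vv$, which is exactly what the paper does (via H\"older on the leave-one-out terms, plus the auxiliary estimates of Propositions~\ref{pro:estimation_xQx} and~\ref{pro:est_xQX}). The $\log p$ factor there comes not from a net but from the infinity-norm concentration of the diagonal vectors $(\tfrac1p\vx_i^\intercal\mQ_{-i}\vx_i-\delta)_{i\le n}$, cf.\ Lemma~\ref{lem:borne_esp_norm_vecteur_lin_conc}.
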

\begin{proof}[Sketch of proof]
	The first step of the proof is to show the concentration of $\mR(z)$. This comes from the fact that the application $\mX \mapsto \mR(z)$ is $2z^{-3/2}p^{-1/2}$-Lipschitz {w.r.t.\@} the Frobenius norm, thus we have by Proposition~\ref{prop:conc_lip} that $\mR(z) \in \mathcal E_q ( p^{-1/2} \, | \, \sR^{n\times n}, \Vert \cdot \Vert_F) $.
	The second step consists in estimating $\mathbb E \mR(z)$ through a deterministic matrix $\tilde \mR(z)$. Indeed, $\mR(z)$ can be expressed as a function of $\mQ(z) = (\mX \mX\tp / p + z\mI_p)^{-1}$ as $\mR(z) = z^{-1}(\mI_n - \mX\tp \mQ(z) \mX/p)$, and exploiting the result of~\cite{louart2019large} which shows that $\mathbb E \mQ(z)$ can be estimated through $\tilde \mQ(z)$, we obtain the estimator $\tilde \mR(z)$ for $\mathbb E \mR(z)$. 
	A more detailed proof is provided in Section~\ref{sec:proof} of the Appendix.
\end{proof}

\begin{figure}[b!]
    \centering
    \includegraphics[width=0.85\linewidth]{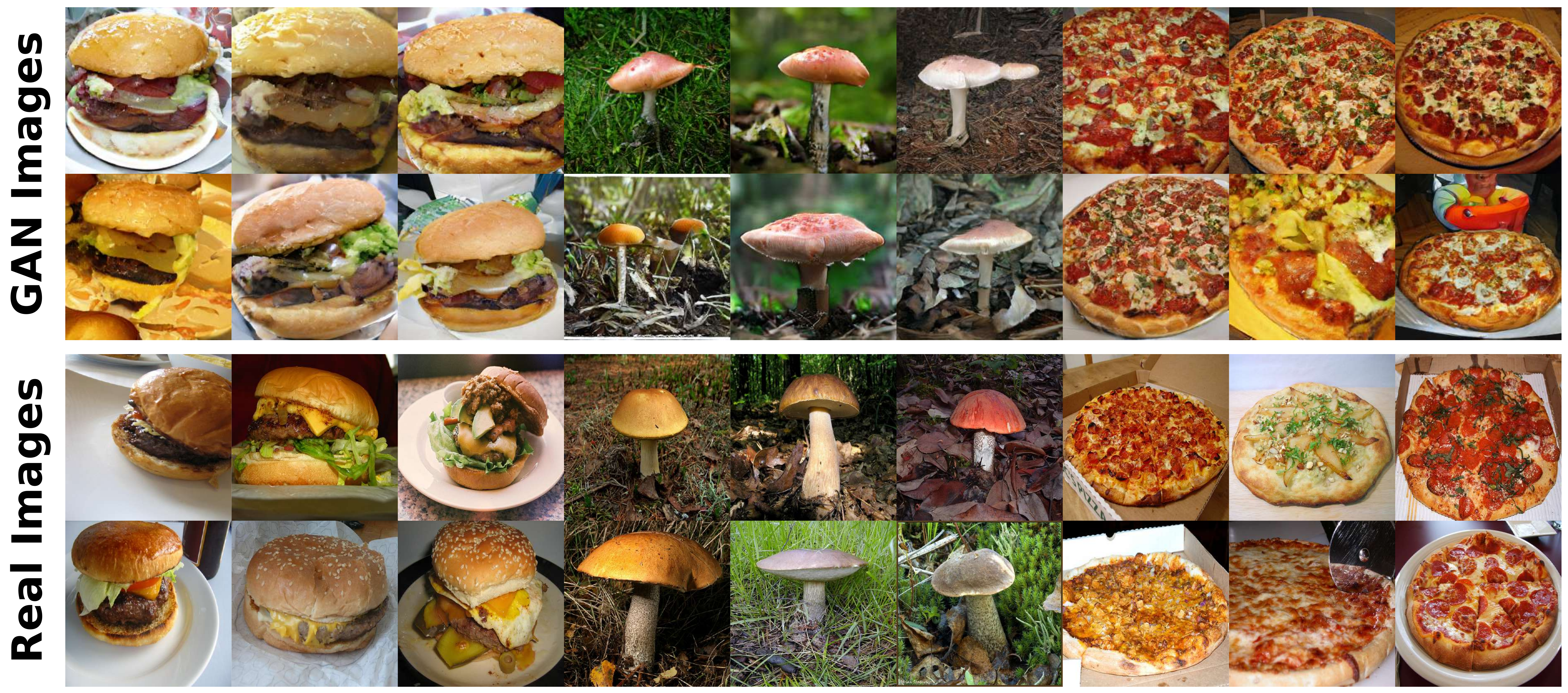}
    \caption{\textbf{(Top)} GAN generated images using the BigGAN model~\cite{brock2018large}. \textbf{(Bottom)} Real images selected from the Imagenet dataset~\cite{deng2009imagenet}. We considered $n=1500$ images from $k=3$ classes which are Mushroom, Pizza and Hamburger.}
    \label{fig:GAN_vs_real}
\end{figure}

This result allows specifically to (i) describe the limiting eigenvalues distribution of $\mG$, (ii) determine the spectral detectability threshold mentioned above, (iii) evaluate the asymptotic ``content'' of the leading eigenvectors of $\mG$ and, much more fundamentally, (iv) infer the asymptotic performances of machine learning algorithms that are based on simple functionals of $\mG$ (\textit{e.g.}, LS-SVM, spectral clustering etc.). Looking carefully at Theorem~\ref{thm:main_thm} we see that the spectral behavior of the Gram matrix $\mG$ computed on concentrated vectors only depends on the \textit{first} and \textit{second} order statistics of the laws $\mu_{\ell}$ (their means $\vm_{\ell}$ and ``covariances'' $\mC_{\ell}$). This suggests the surprising result that $\mG$ has the same behavior as when the data follow a GMM model with the same means and covariances. The asymptotic spectral behavior of $\mG$ is therefore \textit{universal} with respect to the data distribution laws which satisfy the aforementioned concentration properties (for instance DL representations of GAN-data). We illustrate this universality result in the next section by considering data as CNN representations of GAN generated images.

\section{Application to CNN Representations of GAN-generated Images}\label{sec:4}

In this section, we consider $n=1500$ data $\vx_1,\ldots,\vx_n\in \sR^p$ as CNN representations --across popular CNN architectures of different sizes $p$-- of GAN-generated images using the generator of the Big-GAN model~\cite{brock2018large}. We further use real images from the Imagenet dataset~\cite{deng2009imagenet} for comparison. In particular, we empirically compare the spectrum of the Gram matrix of this data with the Gram matrix of a GMM model with the same means and covariances. We also consider the leading $2$-dimensional eigenspace of the Gram matrix which contains clustering information as detailed in the previous section. Figure~\ref{fig:GAN_vs_real} depicts some images generated using the Big-GAN model (Top) and the corresponding real class images from the Imagenet dataset (Bottom). The Big-GAN model is visually able to generate highly realistic images which are by construction concentrated vectors, as discussed in Section~\ref{sec:gan}.

\begin{figure}[t!]
    \centering
    \includegraphics[width=0.95\linewidth]{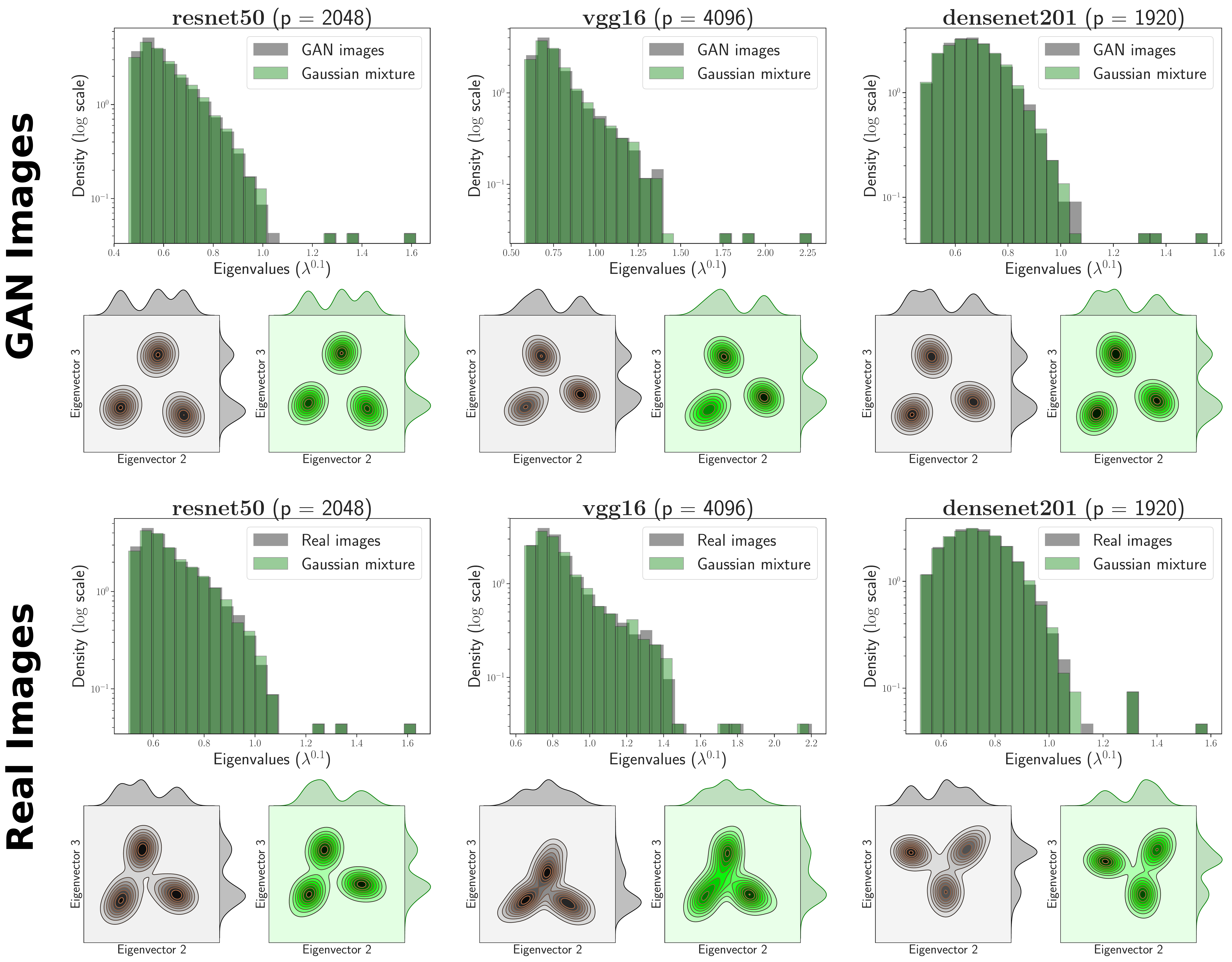}
    \caption{\textbf{(Top)} Spectrum and leading eigenspace of the Gram matrix for CNN representations of GAN generated images using the BigGAN model~\cite{brock2018large}. \textbf{(Bottom)} Spectrum and leading eigenspace of the Gram matrix for CNN representations of real images selected from the Imagenet dataset~\cite{deng2009imagenet}. Columns correspond to the three representation networks (Resnet50, VGG16 and Densenet201).}
    \label{fig:spectrum}
\end{figure}

Figure~\ref{fig:spectrum} depicts the spectrum and leading 2D eigenspace of the Gram matrix computed on CNN representations of GAN generated and real images (in gray), and the corresponding GMM model with same first and second order statistics (in green). The Gram matrix is seen to follow the same spectral behavior for GAN-data as for the GMM model which is a natural consequence of the universality result of Theorem~\ref{thm:main_thm} with respect to the data distribution. Besides, and perhaps no longer surprisingly, we further observe that the spectral properties of $\mG$ for real data (here CNN representations of real images) are conclusively matched by their Gaussian counterpart. This both theoretically and empirically confirms that the proposed random matrix framework is fully compliant with the theoretical analysis of real machine learning datasets.

\section{Conclusion}\label{sec:5}
Leveraging on random matrix theory (RMT) and the concentration of measure phenomenon, we have shown through this paper that DL representations of GAN-data behave as Gaussian mixtures for linear classifiers, a fundamental \textit{universal} property which is only valid in high-dimension of data. To the best of our knowledge, this result constitutes a new approach towards the theoretical understanding of complex objects such as DL representations, as well as the understanding of the behavior of more elaborate machine learning algorithms for complex data structures. In addition, the article explicitly demonstrated our ability, through RMT, to anticipate the behavior of a wide range of standard classifiers for data as complex as DL representations of the realistic and surprising images generated by GANs. This opens the way to a more systematic analysis and improvement of machine learning algorithms on real datasets by means of large dimensional statistics.

\bibliography{biblio}
\bibliographystyle{alpha}

\appendix
\section{Proof of Theorem~\ref{thm:main_thm}}

\subsection{Setting of the proof}{}
For simplicity, we will only suppose the case $k=1$ and we consider the following notations that will be used subsequently.
$$
\bar \vx = \mathbb E \vx_i,\,\, \mC = \mathbb E [\vx_i\vx_i^{\intercal} ] ,\,\, \mX_0 = \mX - \bar \vx \1_n^{\intercal},\,\, \mC_0 = \mathbb E [\mX_0 \mX_0^{\intercal}/n].
$$
Let
$$ \mX_{-i} = (\vx_1,\ldots,\vx_{i-1},0,\vx_i,\ldots,\vx_n)
$$
the matrix $X$ with a vector of zeros at its $i$th column.

Denote the resolvents 
\begin{align}\label{eq:resolvents}
	\mR = \left( \frac{\mX^{\intercal}\mX}{p} + z\mI_n\right)^{-1},\,\, \mQ = \left( \frac{\mX \mX^{\intercal}}{p} + z\mI_p\right)^{-1},\,\, \mQ_{-i} = \left( \frac{\mX \mX^{\intercal}}{p} -  \frac{\vx_i \vx_i^{\intercal}}{p} + z\mI_p\right)^{-1}
\end{align}
And let
\begin{align}\label{eq:det_equiv_Q}
	\tilde \mQ = \left( \frac{1}{c}\frac{\mC}{1+\delta} + z \mI_p \right)^{-1},
\end{align}
where $\delta$ is the solution to the fixed point equation
$$ \delta = \frac{1}{p}\tr \left( \mC \left( \frac{1}{c}\frac{\mC}{1+\delta} + z \mI_p \right)^{-1} \right). $$
\subsection{Basic tools}
\begin{lemma}[\cite{ledoux2005concentration}]\label{lem:borne_esp_norm_vecteur_lin_conc}
Let $\vz \in \mathcal{E}_q(1 \,|\, \mathbb{R}^p, \Vert \cdot \Vert)$ and $\mM \in \mathcal{E}_q(1 \,|\, \mathbb{R}^{p\times n}, \Vert \cdot \Vert_F)$. Then, for some numerical constant $C>0$
\begin{itemize}
  \item  $\mathbb{E}\left\Vert \vz\right\Vert\leq\Vert \mathbb E \vz\Vert + C \sqrt{p},\,\,\, \mathbb{E}\left\Vert \vz\right\Vert_\infty\leq\Vert \mathbb E\vz\Vert_\infty + C \sqrt{\log p}{}.$
  \item  $\mathbb{E}\left\Vert \mM\right\Vert \leq \Vert \mathbb E\mM\Vert + C\sqrt{p+n},\,\,\, \mathbb{E}\left\Vert \mM\right\Vert_F \leq \Vert \mathbb E\mM\Vert_F + C\sqrt{pn}.$
\end{itemize}
\end{lemma}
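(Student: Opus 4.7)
My plan is to prove all four inequalities by a single template: apply the triangle inequality (or Jensen's inequality) to write $\mathbb{E}\Vert \cdot \Vert$ as the norm of the mean plus an expected fluctuation, then bound the fluctuation by applying the $q$-exponential concentration property to an appropriate family of $1$-Lipschitz scalar observations of the random vector or matrix.

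For the Euclidean bound on $\vz$, observe that each coordinate map $\vx\mapsto \ve_i\tp \vx$ is $1$-Lipschitz, so concentration yields $\mathbb{P}(|z_i - \mathbb{E} z_i| > t) \leq C e^{-(t/\sigma)^q}$ with $\sigma$ dimension-free, and integration of this tail bounds $\Var(z_i)$ by a universal constant $C_q$. Summing over coordinates, $\mathbb{E}\Vert \vz - \mathbb{E}\vz\Vert^2 = \sum_{i=1}^p \Var(z_i) \leq C_q\, p$, and by Pythagoras and Jensen's inequality $\mathbb{E}\Vert\vz\Vert \leq \sqrt{\Vert\mathbb{E}\vz\Vert^2 + C_q p} \leq \Vert\mathbb{E}\vz\Vert + \sqrt{C_q}\,\sqrt{p}$. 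The Frobenius bound for $\mM$ follows by the identical template applied to the entry maps $\mM\mapsto M_{ij}$, each $1$-Lipschitz w.r.t.\@ $\Vert\cdot\Vert_F$: this gives $\mathbb{E}\Vert \mM - \mathbb{E}\mM\Vert_F^2 = \sum_{i,j}\Var(M_{ij}) \leq C_q p n$ and hence $\mathbb{E}\Vert\mM\Vert_F \leq \Vert\mathbb{E}\mM\Vert_F + \sqrt{C_q}\,\sqrt{pn}$.

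For the $\ell_\infty$ bound, the triangle inequality gives $\Vert\vz\Vert_\infty \leq \Vert\mathbb{E}\vz\Vert_\infty + \max_{i\leq p}|z_i - \mathbb{E} z_i|$. Since each $z_i - \mathbb{E} z_i$ has a $q$-exponential tail with dimension-free parameter, a standard maximal-inequality argument (integrate $\mathbb{P}(\max_i|z_i - \mathbb{E} z_i|>t)\leq p\cdot Ce^{-(t/\sigma)^q}$ over $t\geq 0$) yields $\mathbb{E}\max_{i\leq p}|z_i - \mathbb{E} z_i| \leq C'(\log p)^{1/q}$. In the concentration regime relevant to the paper ($q\geq 2$, which holds for Lipschitz images of Gaussian vectors), this is bounded by $C''\sqrt{\log p}$, yielding the stated inequality.

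The spectral-norm bound is the main obstacle, since per-coordinate arguments only recover the Frobenius rate. I would use the variational identity $\Vert\mM - \mathbb{E}\mM\Vert = \sup_{\vu\in S^{p-1},\vv\in S^{n-1}} \vu\tp(\mM - \mathbb{E}\mM)\vv$ and observe that for any fixed $(\vu,\vv)$ the bilinear form $\mM\mapsto \vu\tp \mM \vv$ is $1$-Lipschitz in Frobenius norm; by Proposition~\ref{prop:conc_lip} it inherits a $q$-exponential concentration with $\mathcal{O}(1)$ tail parameter. Fix $\tfrac14$-nets $\mathcal{N}_p\subset S^{p-1}$ and $\mathcal{N}_n\subset S^{n-1}$ of cardinalities at most $9^p$ and $9^n$ respectively; a standard discretization argument then gives $\Vert\mM - \mathbb{E}\mM\Vert \leq 2\max_{\vu\in\mathcal{N}_p,\vv\in\mathcal{N}_n}\vu\tp(\mM - \mathbb{E}\mM)\vv$. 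Applying the same maximal-inequality argument as above, now to at most $9^{p+n}$ centered random variables with $q$-exponential tails, yields expectation $\leq C'''((p+n)\log 9)^{1/q}$, which for $q\geq 2$ is $\leq C''''\sqrt{p+n}$; combining with the triangle inequality concludes. The delicacy lies in balancing the net cardinality (exponential in $p+n$) against the tail rate: only a $q$-exponential (rather than merely polynomial) concentration is strong enough to absorb an $e^{\mathcal{O}(p+n)}$ union bound while retaining a $\sqrt{p+n}$ rate.
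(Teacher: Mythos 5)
The paper offers no proof of this lemma; it is cited directly to Ledoux's monograph, so there is no in-paper argument to compare against. Your self-contained reconstruction is sound and uses the standard toolkit. The Euclidean and Frobenius bounds follow as you describe from per-coordinate variance bounds (integrating the $q$-exponential tail gives $\Var(z_i)=\mathcal{O}(1)$ for any $q>0$), followed by the bias--variance decomposition and Jensen. The $\ell_\infty$ bound via the union bound and tail integration (taking the minimum with $1$ before integrating) gives $\E\max_i|z_i-\E z_i|\lesssim (\log p)^{1/q}$, and the spectral bound via a $\tfrac14$-net of size $9^{p+n}$ gives $\E\Vert\mM-\E\mM\Vert\lesssim (p+n)^{1/q}$; both reduce to the stated $\sqrt{\log p}$ and $\sqrt{p+n}$ rates only when $q\geq 2$. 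You are right to flag this: as written, the lemma's second and third inequalities are genuinely false for $q<2$ (e.g., for $q=1$ the extreme value of $p$ i.i.d. coordinates with exactly exponential tails grows like $\log p$, not $\sqrt{\log p}$), so the lemma implicitly assumes the sub-Gaussian regime $q\geq 2$, which is the case produced by Lipschitz images of Gaussian vectors throughout the paper. The only other thing worth polishing in your write-up is the $\ell_\infty$ integral: integrating $p\,Ce^{-(t/\sigma)^q}$ over all $t\geq 0$ does not by itself give the $(\log p)^{1/q}$ scaling --- you need to split at $t_0=\sigma(\log(pC))^{1/q}$ and bound the tail of the integral separately, which you clearly intend but did not spell out.
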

\begin{lemma}\label{lem:StQ_bxtQbx_borne}
	Denote $\mQ_{ \bar \vx} = (\bar \vx \bar \vx^{\intercal} +z \mI_p)^{-1} $, we have:
    \begin{align*}
        \mQ_{\bar \vx} \bar \vx = \frac{\bar \vx}{\Vert \bar \vx \Vert^2 + z}\,\,
        \text{ and }\,\,
        \Vert \tilde \mQ \bar \vx\Vert,~\bar \vx \tilde \mQ \bar \vx =\mathcal O(1).
    \end{align*}
    Moreover, if $\Vert \bar \vx \Vert \geq \sqrt p$, $\Vert \tilde \mQ \bar \vx \Vert =\mathcal O(p^{-1/2})$.
\end{lemma}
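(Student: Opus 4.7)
First, the identity $\mQ_{\bar\vx}\bar\vx = \bar\vx/(\Vert\bar\vx\Vert^2 + z)$ is immediate: $\bar\vx$ is an eigenvector of $\bar\vx\bar\vx^{\intercal} + z\mI_p$ with eigenvalue $\Vert\bar\vx\Vert^2 + z$, so inverting this eigenrelation gives the stated formula.

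The plan for the bounds on $\tilde\mQ$ is to isolate the rank-one contribution of $\bar\vx$ to $\mC$. Writing $\mC = \mC_0 + \bar\vx\bar\vx^{\intercal}$ with $\mC_0 = \mathbb{E}[(\vx_i - \bar\vx)(\vx_i - \bar\vx)^{\intercal}]$, and setting $\alpha = 1/(c(1+\delta))$ and $\mA = \alpha\mC_0 + z\mI_p$, one has $\tilde\mQ = (\mA + \alpha\bar\vx\bar\vx^{\intercal})^{-1}$. The Sherman--Morrison identity, applied to $\bar\vx$, gives
\begin{equation*}
    \tilde\mQ\bar\vx = \frac{\mA^{-1}\bar\vx}{1 + \alpha u},
    \qquad
    \bar\vx^{\intercal}\tilde\mQ\bar\vx = \frac{u}{1+\alpha u},
    \qquad u := \bar\vx^{\intercal}\mA^{-1}\bar\vx \geq 0.
\end{equation*}
The quadratic-form bound is then immediate from $t/(1+\alpha t) \leq 1/\alpha$, giving $\bar\vx^{\intercal}\tilde\mQ\bar\vx \leq c(1+\delta) = \mathcal O(1)$. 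For $\Vert\tilde\mQ\bar\vx\Vert$, the Cauchy--Schwarz-type estimate $\Vert\mA^{-1}\bar\vx\Vert^2 = \bar\vx^{\intercal}\mA^{-2}\bar\vx \leq \Vert\mA^{-1}\Vert\, u \leq u/z$, combined with the fact that $\sqrt{u}/(1+\alpha u)$ attains its maximum at $u = 1/\alpha$, yields $\Vert\tilde\mQ\bar\vx\Vert \leq \tfrac12\sqrt{c(1+\delta)/z} = \mathcal O(1)$.

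For the refined bound under $\Vert\bar\vx\Vert \geq \sqrt p$, I would instead use the \emph{lower} estimate $u \geq \Vert\bar\vx\Vert^2/\Vert\mA\Vert$. Since $\Vert\mA\Vert = \mathcal O(1)$ (assuming $\Vert\mC_0\Vert$ is bounded), this gives $1+\alpha u \gtrsim \Vert\bar\vx\Vert^2 \gtrsim p$, while the numerator still satisfies $\Vert\mA^{-1}\bar\vx\Vert \leq \Vert\bar\vx\Vert/z = \mathcal O(\sqrt p)$ by Assumption~\ref{assum:1}. Dividing yields $\Vert\tilde\mQ\bar\vx\Vert = \mathcal O(p^{-1/2})$ as claimed.

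The main subtlety is that every step above depends on uniform-in-$p$ boundedness of $\delta$ (so that $\alpha$ is bounded away from $0$ and $\infty$) and of $\Vert\mC_0\Vert$ (so that $\Vert\mA\Vert$ is bounded and $\mA \succeq z\mI_p$ implies $\Vert\mA^{-1}\Vert \leq 1/z$). Both properties are standard consequences of Assumption~\ref{assum:1} and the well-posedness of the fixed-point equation defining $\delta$, established in~\cite{louart2019large} and implicitly relied upon throughout this appendix.
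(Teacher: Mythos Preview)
Your proof is correct. Both you and the paper begin by splitting $\mC=\mC_0+\bar\vx\bar\vx^{\intercal}$, but the subsequent mechanics differ. The paper drops $\mC_0$ via the L\"owner ordering $\tilde\mQ \preceq (\alpha\bar\vx\bar\vx^{\intercal}+z\mI_p)^{-1}$ and then reads off the bounds from the explicit rank-one resolvent $\mQ_{\bar\vx}$ (with $z$ replaced by $zc(1+\delta)$); this gives the quadratic-form bound directly and requires nothing of $\Vert\mC_0\Vert$, but the passage from that operator inequality to a bound on $\Vert\tilde\mQ\bar\vx\Vert$ is written somewhat informally there. Your Sherman--Morrison route instead produces an \emph{exact} expression $\tilde\mQ\bar\vx=\mA^{-1}\bar\vx/(1+\alpha u)$, from which both the quadratic-form and the norm bounds follow by clean scalar estimates; the price is that the refined $\mathcal O(p^{-1/2})$ bound now needs $\Vert\mA\Vert=\mathcal O(1)$, i.e.\ $\Vert\mC_0\Vert=\mathcal O(1)$, which you correctly flag and which is indeed a consequence of the concentration hypothesis on the $\vx_i$'s (any $1$-Lipschitz linear form $\vv^{\intercal}(\vx_i-\bar\vx)$ has $\mathcal O(1)$ variance). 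So your argument is slightly less economical in its assumptions but more transparent for the norm estimate.
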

\begin{proof}
    Since $z \mQ_{\bar \vx} = \mI_p - \mQ_{\bar \vx} \bar \vx \bar \vx \tp$~:
    \begin{align*}
        z\mQ_{\bar \vx}\bar \vx = \bar \vx - \Vert \bar \vx \Vert^2 \mQ_{\bar \vx} \bar \vx, 
    \end{align*}
    and we recover the first identity of the Lemma.
    
    And since the matrix $\mC_0$ is nonnegative symmetric, we have~:
    \begin{align*}
         \tilde \mQ \bar \vx
        =  \left(\frac{1}{c} \frac{\mC_0 + \bar \vx  \bar \vx\tp}{1+\delta} +z \mI_p \right)^{-1} \bar \vx 
        \leq \frac{c(1+\delta) \bar \vx }{\Vert \bar \vx \Vert^2 +zc(1+\delta)}. 
    \end{align*}
    Therefore, $\bar \vx \tilde \mQ \bar \vx = \frac{c(1+\delta) \Vert \bar \vx \Vert^2 }{\Vert \bar \vx \Vert^2 +zc(1+\delta)} =\mathcal O(1)$ and:
    \begin{align*}
        \Vert \tilde \mQ \bar \vx \Vert = \frac{c(1+\delta) \Vert \bar \vx \Vert }{\Vert \bar \vx \Vert^2 +zc(1+\delta)}\leq \left\{
        \begin{aligned}
        &\frac{\Vert \bar \vx \Vert}{z} =\mathcal O(1) \ \ \text{if} \ \Vert \bar \vx \Vert \leq 1, \\
        &\frac{c(1+\delta) }{\Vert \bar \vx \Vert}=\mathcal O(1) \ \ \text{if}  \  \Vert \bar \vx \Vert \geq 1.
        \end{aligned}\right.
    \end{align*}
\end{proof}

\begin{proposition}\label{pro:estimation_xQx}
  $\bar \vx\tp \mathbb E [\mQ]\bar \vx = \bar \vx\tp \tilde \mQ \bar \vx + \mathcal O \left(\sqrt {\frac{\log p}{p}}\right)$
\end{proposition}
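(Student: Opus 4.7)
The plan is to estimate $\bar\vx^\intercal(\mathbb{E}[\mQ] - \tilde{\mQ})\bar\vx$ via a resolvent-identity / Sherman-Morrison argument, crucially exploiting the norm bound on $\tilde\mQ\bar\vx$ from Lemma~\ref{lem:StQ_bxtQbx_borne} in order to turn an operator-norm estimate on $\mathbb{E}[\mQ] - \tilde{\mQ}$ into a bilinear-form estimate of the advertised sharpness.

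First I would start from the exact identity
\[
\mathbb{E}[\mQ] - \tilde{\mQ} \,=\, \mathbb{E}\!\left[\mQ\bigl(\tilde{\mQ}^{-1} - \mQ^{-1}\bigr)\right]\tilde{\mQ},
\]
where $\tilde{\mQ}^{-1} - \mQ^{-1} = \tfrac{n}{p(1+\delta)}\mC - \tfrac{1}{p}\sum_{i=1}^n \vx_i\vx_i^\intercal$. Applying Sherman-Morrison, $\mQ\vx_i = \mQ_{-i}\vx_i/(1+\alpha_i)$ with $\alpha_i := \tfrac{1}{p}\vx_i^\intercal \mQ_{-i}\vx_i$, and using the independence of $\vx_i$ from $\mQ_{-i}$ (so that $\mathbb{E}[\mQ_{-i}\vx_i\vx_i^\intercal\mid\mQ_{-i}] = \mQ_{-i}\mC + \mQ_{-i}\bar\vx\bar\vx^\intercal$), the right-hand side rearranges into a sum over $i\in[n]$ of ``error terms'' proportional to $\tfrac{1}{1+\delta} - \tfrac{1}{1+\alpha_i}$, plus rank-one corrections of order $p^{-1}$ coming from the gap $\mathbb{E}[\mQ] - \mathbb{E}[\mQ_{-i}]$.

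The main technical step is then to show that $\alpha_i$ concentrates around $\delta$ at the rate $\mathcal{O}(\sqrt{\log p / p})$, uniformly over $i\in[n]$. For this I would apply Proposition~\ref{prop:conc_lip} to the quadratic observable $\vx\mapsto \tfrac{1}{p}\vx^\intercal\mQ_{-i}\vx$ (which, conditionally on $\mX_{-i}$, is $\mathcal{O}(p^{-1/2})$-Lipschitz on the bulk set $\{\|\vx\|\lesssim \sqrt{p}\}$ thanks to Lemma~\ref{lem:borne_esp_norm_vecteur_lin_conc}), deducing $\alpha_i \approx \tfrac{1}{p}\tr\!\left(\mC\,\mathbb{E}[\mQ_{-i}]\right) + \tfrac{1}{p}\bar\vx^\intercal\mathbb{E}[\mQ_{-i}]\bar\vx$; then I would combine this with the trace-norm deterministic equivalent $\mathbb{E}[\mQ_{-i}]\approx\tilde{\mQ}$ already established in~\cite{louart2019large} and with the fixed-point equation that defines $\delta$.

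Finally I would contract the identity with $\bar\vx$ on both sides. The crucial point is that $\tilde{\mQ}\bar\vx$ enters on one end, and by Lemma~\ref{lem:StQ_bxtQbx_borne} one has $\|\tilde{\mQ}\bar\vx\| = \mathcal{O}(p^{-1/2})$ in the interesting regime $\|\bar\vx\|\geq\sqrt{p}$. This provides exactly the $p^{-1/2}$ gain that turns the naive operator-norm error $\|\mathbb{E}[\mQ]-\tilde{\mQ}\|\bar\vx\|^2 = \mathcal{O}(\sqrt{p\log p})$ into the desired $\mathcal{O}(\sqrt{\log p/p})$. The main obstacle I expect is controlling the $\sqrt{\log p}$ factor sharply: it originates from a union bound over the $n$ concentration events $\{|\alpha_i-\delta|>t\}$ against the $q$-exponential tail of Definition~\ref{eq:def_exp_conc}, and it must be shown to survive the subsequent Cauchy-Schwarz step and the handling of the rank-one discrepancy $\mathbb{E}[\mQ - \mQ_{-i}]$, which although of order $p^{-1}$ in spectral norm contributes a bilinear form whose size requires an independent argument analogous to the one above.
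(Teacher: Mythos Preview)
Your overall framework—resolvent identity, Sherman--Morrison to isolate the fluctuation $\alpha_i-\delta$, then concentration of $\alpha_i$—is exactly the paper's starting point. The gap is in how you control the resulting trilinear form, which after the expansion reads schematically
\[
\bar\vx^\intercal(\mathbb{E}[\mQ]-\tilde\mQ)\bar\vx \;\approx\; \frac{c^{-1}}{1+\delta}\,\mathbb{E}\!\left[(\bar\vx^\intercal\mQ\vx_i)\,(\vx_i^\intercal\tilde\mQ\bar\vx)\,(\alpha_i-\delta)\right].
\]
The bound $\|\tilde\mQ\bar\vx\|=\mathcal O(p^{-1/2})$ you invoke only handles the factor $\vx_i^\intercal\tilde\mQ\bar\vx$ (and only when $\|\bar\vx\|\ge\sqrt p$, which is not part of the assumptions). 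The factor $\bar\vx^\intercal\mQ\vx_i$ involves the \emph{random} resolvent $\mQ$, and you give no argument that it is $\mathcal O(1)$: the naive bound $\|\bar\vx\|\|\mQ\|\|\vx_i\|$ is $\mathcal O(p)$. Your back-of-the-envelope (``the $p^{-1/2}$ gain turns $\mathcal O(\sqrt{p\log p})$ into $\mathcal O(\sqrt{\log p/p})$'') implicitly keeps $\|\mathbb E[\mQ]-\tilde\mQ\|$ as the middle operator, but once you peel off $\tilde\mQ$ to the right you pick up a $\tilde\mQ^{-1}$ in the middle whose norm is of order $\|\bar\vx\|^2=\mathcal O(p)$, exactly cancelling your gain. (Also minor: in the paper's notation $\mC=\mathbb E[\vx_i\vx_i^\intercal]$ is already the non-centred second moment, so your conditional expectation should read $\mQ_{-i}\mC$, not $\mQ_{-i}\mC+\mQ_{-i}\bar\vx\bar\vx^\intercal$.)

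The paper handles the problematic factor $\bar\vx^\intercal\mQ\vx_i$ by a device you do not mention: it replaces the deterministic $\bar\vx$ by an independent \emph{extra} sample $\vx_{n+1}$ (licit since $\bar\vx=\mathbb E[\vx_{n+1}]$ and $\mQ$ is independent of $\vx_{n+1}$), and then re-inserts $\vx_{n+1}$ into the resolvent via Sherman--Morrison to obtain $\mQ_{+(n+1)}$. The troublesome factor becomes $\vx_{n+1}^\intercal\mQ_{+(n+1)}\vx_i$, a generic entry of $\tfrac1p\mX^\intercal\mQ_{+(n+1)}\mX$, whose operator norm is $\le 1$ deterministically. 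A matrix-level Cauchy--Schwarz then splits off $\|\mathbf\Delta\|_\infty^2=\max_i|\alpha_i-\delta|^2=\mathcal O(\log p/p)$ (this is where the $\log p$ actually enters, via the $\|\cdot\|_\infty$ bound of Lemma~\ref{lem:borne_esp_norm_vecteur_lin_conc}, not via a union bound on concentration events) from $\bar\vx^\intercal\tilde\mQ\mC\tilde\mQ\bar\vx=\mathcal O(1)$, the latter holding for \emph{all} $\|\bar\vx\|=\mathcal O(\sqrt p)$, not just the regime you single out.
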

\begin{proof}
  Let us bound:
  \begin{align*}
    \left\vert \bar \vx\tp \mQ \bar \vx -\bar \vx\tp \tilde \mQ \bar \vx\right\vert
    & \leq \frac{c^{-1}}{1+\delta}\left\vert \mathbb E \left[\bar \vx \mQ \vx_i \vx_i\tp \tilde{\mQ}\bar \vx \left(\frac1p \vx_i \tp \mQ_{-i} \vx_i-\delta\right)\right]
    +\frac{1}{p}\mathbb{E}\left[\bar \vx \tp \mQ_{-i} \vx_i \vx_i\tp \mQ \mC \tilde{\mQ}\bar \vx\right]\right\vert
  \end{align*}
  Now let us consider a supplementary random vector $\vx_{n+1}$ following the same low as the $\vx_i$'s and independent of $\mX$. We divide the set $\sI = [n+1]$ into two sets $\sI_{\frac 12}$ and $\sI_{\frac 22}$ of same cardinality ($\lfloor \frac{n+1}{2}\rfloor\leq \# \sI_{\frac 12}, \# \sI_{\frac 22}\leq \lceil \frac{n+1}{2}\rceil)$, we note $\mX_{\frac{1}{2}} =(\vx_i\, | \, i \in \sI_{\frac12})$, $\mX_{\frac{2}{2}} =(\vx_i \, | \, i \in \sI_{\frac22})$ and we introduce the diagonal matrices $\mathbf \Delta = \diag \left(\frac1p \vx_i \tp \mQ_{-i} \vx_i-\delta\, | \, i \in \sI_{\frac12}\right)$, $\mD = \diag \left(1+ \frac1{p+1}\vx_i \tp \mQ \vx_i\, | \, i \in \sI_{\frac22}\right)$. We have the bound:
  \begin{align*}
    &\left\vert \mathbb E \left[\bar \vx \mQ \vx_i \vx_i\tp \tilde{\mQ}\bar \vx \left(\frac1p \vx_i \tp \mQ_{-i} \vx_i-\delta\right)\right]\right\vert  \\
    &\hspace{1cm}=\left\vert \mathbb E \left[\left(1+\frac1p \vx_{n+1}\tp \mQ \vx_{n+1}\right)\vx_{n+1}\mQ_{+(n+1)}\vx_i \vx_i\tp \tilde{\mQ}\bar \vx \left(\frac1p \vx_i \tp \mQ_{-i} \vx_i-\delta\right)\right]\right\vert\\
    &\hspace{1cm}=\frac{1}{p^2}\left\vert \mathbb E \left[\1 \tp \mD \mX_{\frac 2 2} \tp \mQ_{+(n+1)} \mX_{\frac 1 2} \mathbf \Delta \mX_{\frac 1 2}\tp \tilde{\mQ}\bar \vx \right]\right\vert\\
    &\hspace{1cm}\leq \sqrt{\left\vert \mathbb E \left[\frac{1}{p^3}\1 \tp \mD \mX_{\frac 2 2} \tp \mQ_{+(n+1)} \mX_{\frac 1 2} \mathbf \Delta^2 \mX_{\frac 1 2}\tp \mQ_{+(n+1)}\mX_{\frac 2 2} \mD \1\right] \mathbb E \left[\frac1p\bar \vx\tp \tilde \mQ \mX_{\frac 1 2} \mX_{\frac 1 2}\tp \tilde{\mQ}\bar \vx \right]\right\vert}\\
    &\hspace{1cm}\leq \sqrt{\left\vert \mathbb E \left[\left\Vert \frac{1}{p} \mX_{\frac 2 2} \tp \mQ_{+(n+1)} \mX_{\frac 1 2}\right\Vert^2 \left\Vert \mD \right\Vert^2 \left\Vert \mathbf \Delta\right\Vert^2\right] \mathbb E \left[\bar \vx \tilde \mQ \mC \tilde{\mQ}\bar \vx \right]\right\vert}
    \ \leq \ \ \mathcal O \left( \sqrt {\frac{\log p}{p}} \right),
  \end{align*}
  thanks to Lemma~\ref{lem:borne_esp_norm_vecteur_lin_conc} and Lemma~\ref{lem:StQ_bxtQbx_borne} (the spectral norm of $\Delta$ and $D$ is just an infinity norm if we see them as random vectors of $\mathbb R^n$). We can bound $\frac{1}{p}\left\vert\mathbb{E}\left[\bar \vx \tp \mQ_{-i} \vx_i \vx_i\tp \mQ \mC\tilde{\mQ}\bar \vx\right]\right\vert$ the same way to obtain the result of the proposition.
\end{proof}
\begin{proposition}\label{pro:est_xQX}
    $\Vert \mathbb E [\vx_i\tp \mQ_{-i} \mX_{-i}] - \frac{\bar \vx \tp\tilde \mQ \bar \vx \1\tp}{1+\delta}\Vert = \mathcal{O}(\sqrt {\log p})$
\end{proposition}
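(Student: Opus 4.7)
The plan is to bound the $\ell_2$ (equivalently spectral) norm of the row vector $\vx_i\tp \mQ_{-i}\mX_{-i} - \frac{\bar\vx\tp \tilde\mQ \bar\vx}{1+\delta}\1\tp$ component by component. The $i$-th column of $\mX_{-i}$ is by definition zero, so the $i$-th entry of the difference reduces to $-\frac{\bar\vx\tp\tilde\mQ\bar\vx}{1+\delta}$, which is $\mathcal O(1)$ by Lemma~\ref{lem:StQ_bxtQbx_borne}. Hence this single entry already contributes $\mathcal O(1)$ to the squared norm, which is absorbed into $\mathcal O(\log p)$. The real content of the proof lies in the $n-1$ off-diagonal entries, each of which I plan to show deviates from the target by $\mathcal O(\sqrt{\log p/p})$, so that summing the squares yields $n\cdot \mathcal O(\log p/p) = \mathcal O(\log p)$ under Assumption~\ref{assum:1}.

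For $j\neq i$, I would apply Sherman--Morrison to remove $\vx_j$ from $\mQ_{-i}$, writing
\[
\vx_i\tp \mQ_{-i}\vx_j \;=\; \frac{\vx_i\tp \mQ_{-i,-j}\vx_j}{1 + \tfrac1p\vx_j\tp \mQ_{-i,-j}\vx_j},
\]
where $\mQ_{-i,-j}$ is independent of both $\vx_i$ and $\vx_j$. Conditioning on $(\mQ_{-i,-j},\vx_j)$ and integrating out $\vx_i$ replaces the numerator by $\bar\vx\tp \mQ_{-i,-j}\vx_j$. I would then use the algebraic identity $\tfrac{1}{1+X} = \tfrac{1}{1+\delta} - \tfrac{X-\delta}{(1+\delta)(1+X)}$ with $X=\tfrac1p\vx_j\tp \mQ_{-i,-j}\vx_j$, splitting the expectation into a main term
$\tfrac{\bar\vx\tp\, \mathbb E[\mQ_{-i,-j}]\,\bar\vx}{1+\delta}$
plus an error term
$\mathbb E\!\left[\frac{(\bar\vx\tp \mQ_{-i,-j}\vx_j)(X-\delta)}{(1+\delta)(1+X)}\right]$.

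The main term differs from the target by $\mathcal O(\sqrt{\log p/p})$ by the same argument that proved Proposition~\ref{pro:estimation_xQx}: removing two columns changes the deterministic equivalent only at order $1/p$, so the estimate $\bar\vx\tp \mathbb E[\mQ_{-i,-j}]\bar\vx = \bar\vx\tp \tilde\mQ\bar\vx + \mathcal O(\sqrt{\log p/p})$ transfers directly. For the error term I would apply Cauchy--Schwarz,
\[
\Bigl|\mathbb E\bigl[\tfrac{(\bar\vx\tp \mQ_{-i,-j}\vx_j)(X-\delta)}{(1+\delta)(1+X)}\bigr]\Bigr|
\le \tfrac{1}{1+\delta}\sqrt{\mathbb E\bigl[(\tfrac{\bar\vx\tp \mQ_{-i,-j}\vx_j}{1+X})^2\bigr]\,\mathbb E[(X-\delta)^2]},
\]
then use the concentration of $\vx_j$ (via Proposition~\ref{prop:conc_lip} applied to the $1$-Lipschitz maps $\vx\mapsto \vx\tp \mA \vx/\sqrt{p\Vert\mA\Vert^2}$ and $\vx\mapsto \bar\vx\tp \mA\vx/\Vert \mA\bar\vx\Vert$, with $\mA=\mQ_{-i,-j}$ having $\Vert \mA\Vert \le 1/z$): this gives $\mathbb E[(X-\delta)^2] = \mathcal O(1/p)$ and the bracket in the first factor is $\mathcal O(1)$, yielding an error of order $\mathcal O(p^{-1/2})$, which is better than what is needed.

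The main obstacle I expect is handling the resolvent $\mQ_{-i,-j}$ carefully: it is random and its spectral norm only bounded via $\Vert\mQ_{-i,-j}\Vert \le 1/z$, so I need a concentration argument that does not depend on freezing this resolvent. The cleanest route is to take expectations in two stages, first conditioning on $\mQ_{-i,-j}$ (which makes $\vx_i,\vx_j$ independent concentrated vectors so that all cross-quadratic forms concentrate around deterministic quantities involving $\bar\vx$ and $\mathrm{tr}(\mC\,\mQ_{-i,-j})/p$), and only at the final step averaging over $\mQ_{-i,-j}$ itself, where the concentration of $\mQ$ inherited from that of $\mX$ (exactly as invoked in the sketch of Theorem~\ref{thm:main_thm}) provides the required control. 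Assembling the two contributions then gives
\[
\Vert \mathbb E[\vx_i\tp\mQ_{-i}\mX_{-i}]-\tfrac{\bar\vx\tp\tilde\mQ\bar\vx}{1+\delta}\1\tp\Vert^2 = \underbrace{\mathcal O(1)}_{j=i} + \underbrace{(n-1)\,\mathcal O(\log p/p)}_{j\neq i} = \mathcal O(\log p),
\]
which is the claimed bound.
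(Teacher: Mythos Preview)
Your overall strategy matches the paper's: work entrywise (the paper tests against a unit vector $\vu$ and extracts a factor $\sqrt n$ by exchangeability, which is equivalent to your $\ell_2$ sum over columns), apply Sherman--Morrison to write
\[
\mathbb E[\vx_i\tp\mQ_{-i}\vx_j]=\mathbb E\!\left[\frac{\vx_i\tp\mQ_{-i,-j}\vx_j}{1+\tfrac1p\vx_j\tp\mQ_{-i,-j}\vx_j}\right],
\]
integrate out $\vx_i$, split the denominator around $1+\delta$, and handle the main term via Proposition~\ref{pro:estimation_xQx}. So the architecture is correct.

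The gap is in your error-term estimate. After Cauchy--Schwarz you need
\[
\mathbb E\!\left[\Bigl(\tfrac{\bar\vx\tp\mQ_{-i,-j}\vx_j}{1+X}\Bigr)^{2}\right]=\mathbb E\bigl[(\bar\vx\tp\mQ_{-i}\vx_j)^{2}\bigr]=\mathbb E\bigl[\bar\vx\tp\mQ_{-i}\,\mC\,\mQ_{-i}\bar\vx\bigr]=\mathcal O(1),
\]
and this does \emph{not} follow from the Lipschitz concentration you invoke. The map $\vx\mapsto \bar\vx\tp\mA\vx/\Vert\mA\bar\vx\Vert$ is $1$-Lipschitz, but its normalization $\Vert\mA\bar\vx\Vert$ is only bounded by $\Vert\mA\Vert\,\Vert\bar\vx\Vert=\mathcal O(\sqrt p)$ for the random $\mA=\mQ_{-i,-j}$; likewise the concentration of $\mQ$ in Frobenius norm gives $\bar\vx\tp\mQ\bar\vx$ a fluctuation scale of order $\Vert\bar\vx\Vert^{2}\cdot p^{-1/2}=\mathcal O(\sqrt p)$, so naively $\mathbb E[(\bar\vx\tp\mQ_{-i}\vx_j)^{2}]=\mathcal O(p)$ and your Cauchy--Schwarz bound collapses to $\mathcal O(1)$ per entry, yielding only $\mathcal O(\sqrt n)$ for the norm. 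What you actually need is a \emph{random} analogue of Lemma~\ref{lem:StQ_bxtQbx_borne}, namely $\Vert\mQ_{-i}\bar\vx\Vert=\mathcal O(1)$ (or $\mathcal O(p^{-1/2})$ when $\Vert\bar\vx\Vert\ge\sqrt p$), which is true but is a separate structural fact about $\bar\vx$ being close to a large-eigenvalue direction of $\tfrac1p\mX\mX\tp$; you have not established it, and it is not among the paper's lemmas.

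The paper sidesteps this entirely: for the error term it refers back to the technique of Proposition~\ref{pro:estimation_xQx}, where one writes $\bar\vx=\mathbb E[\vx_{n+1}]$ for an auxiliary independent copy, uses $\vx_{n+1}\tp\mQ=(1+\tfrac1p\vx_{n+1}\tp\mQ\vx_{n+1})\,\vx_{n+1}\tp\mQ_{+(n+1)}$, and then matricizes the resulting sum so that every occurrence of $\bar\vx$ ends up paired with the \emph{deterministic} $\tilde\mQ$, for which Lemma~\ref{lem:StQ_bxtQbx_borne} applies directly. That is what buys the $\mathcal O(\sqrt{\log p/p})$ per-entry bound without ever needing control of $\Vert\mQ\bar\vx\Vert$. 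To close your argument, either prove the random version of Lemma~\ref{lem:StQ_bxtQbx_borne} or replace your Cauchy--Schwarz step by this auxiliary-vector trick.
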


\begin{proof}
Considering $\vu\in \mathbb R^n$ such that $\Vert \vu \Vert=1$:
\begin{align*}
  &\left\vert\mathbb E [\vx_i\tp \mQ_{-i} \mX_{-i} \vu] - \frac{\bar \vx \tp\tilde \mQ \bar \vx \1\tp \vu}{1+\delta}\right\vert\\
  &\hspace{1cm}=\left\vert\sum_{\genfrac{}{}{0pt}{2}{j=1}{j\neq i}}^n \vu_j \mathbb E \left[\frac{\vx_i\tp \mQ_{\genfrac{}{}{0pt}{2}{-i}{-j}} \vx_j}{1+ \frac{1}{p} \vx_j \tp \mQ_{\genfrac{}{}{0pt}{2}{-j}{-i}} \vx_j} - \frac{ \vx_i \tp\tilde \mQ \vx_j  }{1+\delta}\right] \right\vert\\
  &\hspace{1cm}\leq \sqrt n \left\vert \mathbb E \left[\frac{\vx_i\tp \mQ_{\genfrac{}{}{0pt}{2}{-i}{-j}}\vx_j}{1+ \frac{1}{p}\vx_j \tp \mQ_{\genfrac{}{}{0pt}{2}{-j}{-i}} \vx_j} - \frac{\vx_i\tp \mQ_{\genfrac{}{}{0pt}{2}{-i}{-j}}\vx_j}{1+ \delta} \right] \right\vert + \left\vert \frac{1}{1+\delta} \mathbb E \left[\vx_i\tp \mQ_{\genfrac{}{}{0pt}{2}{-i}{-j}}\vx_j - \vx_i \tp\tilde \mQ \vx_j \right]\right\vert  \ \ (\text{where } i \neq j)\\
  &\hspace{1cm}\leq \sqrt n \left\vert \mathbb E \left[\bar \vx\tp \mQ \vx_j\left(\frac{1}{p}\vx_j \tp \mQ_{\genfrac{}{}{0pt}{2}{-j}{-i}} \vx_j- \delta\right) \right] \right\vert + \sqrt n\left\vert \mathbb E \left[\bar \vx\tp \mQ_{\genfrac{}{}{0pt}{2}{-i}{-j}}\bar \vx - \bar \vx \tp\tilde \mQ \bar \vx \right]\right\vert,
\end{align*}
where the first term is treated the same way as we did in the proof of Proposition~\ref{pro:estimation_xQx} and the second term is bounded thanks to Proposition~\ref{pro:estimation_xQx}
\end{proof}

\subsection{Main body of the proof}\label{sec:proof}
\begin{proof}[Proof of Theorem~\ref{thm:main_thm}]
Recall the definition of the resolvents $\mR$ and $\mQ$ in Equation~(\ref{eq:resolvents}). The first step of the proof is to show the concentration of $\mR$. This comes from the fact that the application $\Phi:\mX \mapsto (\mX\tp \mX + z\mI_n)^{-1}$ is $2z^{-3/2}$-Lipschitz \textit{w.r.t.\@} the Frobenius norm. Indeed, by the matrix identity $\mA - \mB = \mA (\mB^{-1} - \mA^{-1})\mB$, we have
$$ \Phi(\mX) - \Phi(\mX + \mH) = \Phi(\mX) (\mH\tp \mX + (\mX+\mH)\tp \mH) \Phi(\mX + \mH) 
$$
And by the bounds $\Vert \mA \mB \Vert_F \leq \Vert \mA \Vert \cdot \Vert \mB \Vert_F$, $\Vert \Phi(\mX)\mX\tp \Vert \leq z^{-1/2}$ and $\Vert \Phi(\mX) \Vert \leq z^{-1}$, we have
$$ \Vert \Phi(\mX + \mH ) - \Phi(\mX) \Vert_F \leq \frac{2}{z^{3/2}} \Vert \mH \Vert_F.
$$
Therefore, given $\mX \in \mathcal E_q (1\, | \, \sR^{p\times n}, \Vert \cdot \Vert_F)$ and since the application $\mX \mapsto \mR = \Phi(\mX/\sqrt{p})$ is $2z^{-3/2}p^{-1/2}$-Lipschitz, we have by Proposition~\ref{prop:conc_lip} that $\mR \in \mathcal E_q ( p^{-1/2} \, | \, \sR^{n\times n}, \Vert \cdot \Vert_F) $. 

The second step consists in estimating $\mathbb E \mR(z)$ through a deterministic matrix $\tilde \mR$. Indeed, by the identity $(\mM\tp \mM + z\mI)^{-1}\mM\tp = \mM\tp (\mM \mM \tp + z \mI)^{-1} $, the resolvent $\mR$ can be expressed in function of $\mQ$ as follows
\begin{align}
    \mR = \frac{1}{z} \left( I_n - \frac{\mX\tp \mQ \mX}{p} \right),
\end{align}
thus a deterministic equivalent for $\mR$ can therefore be obtained through a deterministic equivalent of the matrix $\mX\tp \mQ \mX$. However, as demonstrated in~\cite{louart2019large}, the matrix $\mQ$ has as a deterministic equivalent the matrix $\tilde \mQ$ defined in~\eqref{eq:det_equiv_Q}. In the following, we aim at deriving a deterministic equivalent for $\frac 1 p \mX\tp \mQ \mX$ in function of $\tilde \mQ$. Let $\vu$ and $\vv$ be two unitary vectors in $\mathbb{R}^n$, and let us estimate
\begin{align*}
    \Delta \equiv \mathbb{E} \left[ \vu\tp \left( \frac{\mX\tp \mQ \mX}{p} - \frac{\mX\tp \tilde \mQ \mX}{p} \right)\vv \right] 
    = \frac{1}{p}\mathbb{E} \left[  \frac{\vu\tp \mX\tp \mQ \mC \tilde \mQ \mX \vv}{1 + \delta}- \frac1p \vu \tp \mX \tp \mQ \mX \mX \tp \tilde \mQ \mX \vv\right]
\end{align*}
With the following matrix identities (to explore the independence of the columns of $\mX$):
\begin{align*}
    \mQ = \mQ_{-i} - \frac{1}{p} \mQ_{-i} \vx_i \vx_i\tp \mQ~,&
    &\mQ \vx_i =\frac{\mQ_{-i} \vx_i}{1+ \frac 1 p \vx_i\tp \mQ_{-i}\vx_i},&
    & \mA - \mB = \mA (\mB^{-1} - \mA^{-1}) \mB
\end{align*}
and the decomposition $\mQ \mX \mX\tp = \sum_{i=1}^n \mQ \vx_i \vx_i\tp$, we obtain: 
\begin{align*}
    \Delta&=\frac{1}{p^2} \mathbb{E} \left[  \sum_{i=1}^n \frac{\vu\tp \mX\tp \mQ_{-i} \mC \tilde \mQ \mX \vv}{1 + \delta} - \frac{\vu \tp \mX \tp \mQ_{-i} \vx_i \vx_i\tp \tilde \mQ \mX \vv}{1+ \frac{1}{p} \vx_i\tp \mQ_{-i} \vx_i} -\frac{1}{p}\frac{\vu \tp \mX\tp \mQ_{-i} \vx_i \vx_i\tp \mQ \mC \tilde \mQ \mX \vv}{1+\delta}\right]\\
    &=  \frac{1}{p^2}\sum_{i=1}^n  \mathbb{E} \left[\frac{\vu\tp \mX_{-i}\tp \mQ_{-i} \mC \tilde \mQ \mX_{-i} \vv}{1+\delta} - \frac{\vu \tp \mX_{-i}\tp \mQ_{-i} \vx_i \vx_i\tp \tilde \mQ \mX_{-i} \vv }{1 + \frac{1}{p} \vx_i\tp \mQ_{-i} \vx_i} \right.\\
    &\hspace{1.5cm}+  \frac{\vu_i \vx_i\tp \mQ_{-i} \mC \tilde \mQ \mX_{-i} \vv}{1+\delta} + \frac{\vv_i \vu\tp \mX_{-i} \tp \mQ_{-i} \mC \tilde \mQ \vx_i }{1+\delta} +\vu_i \vv_i\frac{\vx_i\tp \mQ_{-i} \mC \tilde \mQ \vx_i}{1+\delta}\\
    &\hspace{1.5cm}-  \frac{\vu_i \vx_i\tp \mQ_{-i} \vx_i \vx_i\tp \tilde \mQ \mX_{-i} \vv}{1+\frac{1}{p}\vx_i\tp \mQ_{-i} \vx_i} -  \frac{\vv_i \vu\tp \mX_{-i}\tp \mQ_{-i} \vx_i \vx_i\tp \tilde \mQ \vx_i }{1 + \frac{1}{p}\vx_i\tp \mQ_{-i} \vx_i} - \vu_i \vv_i\frac{\vx_i\tp \mQ_{-i} \vx_i \vx_i\tp \tilde \mQ \vx_i}{1+\frac{1}{p} \vx_i\tp \mQ_{-i} \vx_i}\\
    &\hspace{1.5cm} \left.  - \frac{1}{p}\frac{\vu\tp \mX\tp \mQ_{-i} \vx_i \vx_i\tp \mQ \mC \tilde \mQ \mX \vv}{1+\delta} \right]
\end{align*}
We can show with Holder's inequality and the concentration bounds (mainly the fact that $\frac1p \vx_i\tp \mQ_{-i} \vx_i$ concentrates around $\delta$) developed in~\cite{louart2019large}, that most of the above quantities vanish asymptotically. As a toy example, we consider the following term:
\begin{align*}
  &\left\vert\frac{1}{p^2}\sum_{i=1}^n  \mathbb{E} \left[\frac{\vu\tp \mX_{-i}\tp \mQ_{-i} \mC \tilde \mQ \mX_{-i} \vv}{1+\delta} -\frac{\vu\tp \mX_{-i}\tp \mQ_{-i} \vx_i \vx_i\tp \tilde \mQ \mX_{-i} \vv }{1 + \frac{1}{p} \vx_i\tp \mQ_{-i} \vx_i} \right] \right\vert\\
  &\hspace{1cm} = \left\vert\frac{1}{p^2}\sum_{i=1}^n  \mathbb{E} \left[ \vu\tp \mX_{-i}\tp \mQ_{-i} \vx_i \vx_i\tp \tilde \mQ \mX_{-i} \vv \frac{\delta - \frac{1}{p} \vx_i\tp \mQ_{-i} \vx_i}{(1 + \delta)(1+\frac{1}{p} \vx_i\tp \mQ_{-i} \vx_i)} \right] \right\vert \\
  & \hspace{1cm}\leq \left\vert\frac{1}{p^2}\sum_{i=1}^n  \mathbb{E} \left[ (\vu\tp \mX_{-i}\tp \mQ_{-i} \vx_i)(\vx_i\tp \tilde \mQ \mX_{-i} \vv )\left(\delta - \frac{1}{p} \vx_i\tp \mQ_{-i} \vx_i\right) \right] \right\vert \\
  & \hspace{1cm}\leq \left\vert\frac{1}{p}\sum_{i=1}^n \left( \mathbb{E} \left[ \left(\frac{1}{\sqrt p}\vu\tp \mX_{-i}\tp \mQ_{-i} \vx_i\right)^3\right]\mathbb{E} \left[\left(\frac{1}{\sqrt p}\vx_i\tp \tilde \mQ \mX_{-i} \vv \right)^3\right]\mathbb{E} \left[\left(\delta - \frac{1}{p} \vx_i\tp \mQ_{-i} \vx_i\right)^3 \right]\right)^{\frac{1}{3}} \right\vert \\
  &\hspace{1cm}= \ \mathcal O \left(\frac{1}{\sqrt p}\right)
\end{align*}
Similarly, we can show that:
\begin{align*}
  &\left\vert\frac{1}{p^2}\sum_{i=1}^n  \mathbb{E} \left[\frac{\vu_i \vx_i\tp \mQ_{-i} \mC \tilde \mQ \mX_{-i} \vv}{1+\delta} + \frac{\vv_i \vu\tp \mX_{-i} \tp \mQ_{-i} \mC \tilde \mQ \vx_i }{1+\delta}\right.\right. \\
  &\hspace{2cm}\left.\left.+ \vu_i \vv_i\frac{\vx_i\tp \mQ_{-i} \mC \tilde \mQ \vx_i}{1+\delta}- \frac{1}{p}\frac{\vu\tp \mX\tp \mQ_{-i} \vx_i \vx_i\tp \mQ \mC \tilde \mQ \mX \vv}{1+\delta}\right] \right\vert = \mathcal O \left(\frac{1}{\sqrt p}\right)
\end{align*}
Finally, the remaining terms in $\Delta$ can be estimated as follows:
\begin{align*}
  \Delta&=  \frac{1}{p^2}\sum_{i=1}^n  \mathbb{E} \left[-  \frac{\vu_i \vx_i\tp \mQ_{-i} \vx_i \vx_i\tp \tilde \mQ \mX_{-i} \vv}{1+\frac{1}{p}\vx_i\tp \mQ_{-i} \vx_i} \right.\\
  &\hspace{2cm} \left.- \frac{\vv_i \vu\tp \mX_{-i}\tp \mQ_{-i} \vx_i \vx_i\tp \tilde \mQ \vx_i }{1 + \frac{1}{p} \vx_i\tp \mQ \vx_i}- \vu_i \vv_i\frac{\vx_i\tp \mQ_{-i} \vx_i \vx_i\tp \tilde \mQ \vx_i}{1+\frac{1}{p} \vx_i\tp \mQ_{-i} \vx_i} \right]+ \mathcal O \left(\frac{1}{\sqrt p}\right)\\
  &= -\frac2p \frac{\delta \vu \tp \1 \bar \vx\tp \tilde \mQ \bar \vx \1 \tp \vv}{1+\delta} -\frac{\delta^2 \vu\tp \vv}{1+\delta} + \mathcal O \left(\sqrt{\frac{\log p}{p}}\right)
\end{align*}
Where the last equality is obtained through the following estimation:
\begin{align*}
  \frac{1}{p^2}\sum_{i=1}^n\mathbb{E} \left[ \frac{\vv_i \vu\tp \mX_{-i}\tp \mQ_{-i} \vx_i \vx_i\tp \tilde \mQ \vx_i }{1 + \frac{1}{p}\vx_i\tp \mQ_{-i} \vx_i}\right]
  & = \frac{1}{p}\sum_{i=1}^n\mathbb{E} \left[ \frac{\vv_i \vu\tp \mX_{-i}\tp \mQ_{-i} \vx_i \left(\frac1p \vx_i\tp \tilde \mQ \vx_i(1+\delta) -\delta\left(1+ \frac1p \vx_i\tp \tilde \mQ \vx_i\right)\right) }{\left(1 + \frac{1}{p}\vx_i\tp \mQ_{-i} \vx_i\right)(1+\delta)} \right] \\
  & \hspace{1cm} + \frac{1}{p}\sum_{i=1}^n \frac{\vv_i \delta \mathbb E [\vu\tp \mX_{-i}\tp \mQ_{-i} \vx_i]}{(1+\delta)}\\
\end{align*}
With the following bound:
\begin{align*}
  &\left\vert \frac1p \vx_i\tp \tilde \mQ \vx_i(1+\delta) -\delta\left(1+ \frac1p \vx_i\tp \tilde \mQ \vx_i\right)\right\vert\\
  &\hspace{1cm}=\left\vert \frac1p \vx_i\tp \tilde \mQ \vx_i(1+\delta) - \delta(1+\delta) + \delta(1+\delta)-\delta\left(1+ \frac1p \vx_i\tp \tilde \mQ \vx_i\right)\right\vert\\
  &\hspace{1cm}\leq \left\vert \frac1p \vx_i\tp \tilde \mQ \vx_i-\delta\right\vert (1+2\delta),
\end{align*}
we have again with Holder's inequality and Proposition~\ref{pro:est_xQX}:
\begin{align*}
  \frac{1}{p^2}\sum_{i=1}^n\mathbb{E} \left[ \frac{\vv_i \vu\tp \mX_{-i}\tp \mQ_{-i} \vx_i \vx_i\tp \tilde \mQ \vx_i }{1 + \frac{1}{p}\vx_i\tp \mQ \vx_i}\right]
  &=\frac{1}{p}\sum_{i=1}^n \frac{\vv_i \delta \vu\tp \1 \bar \vx\tp \tilde \mQ \bar \vx}{1+\delta} + \mathcal O \left(\sqrt{\frac{\log p}{p}}\right)
\end{align*}
Now that we estimated $\Delta$, it remains to estimate $\mathbb E[\frac1p \mX\tp \tilde \mQ \mX ] $. Indeed, given two unit norm vectors $u,v \in \mathbb R^n$ we have:
\begin{align*}
  \mathbb E\left[\frac1p \vu\tp \mX\tp \tilde \mQ \mX \vv\right]
  &=\frac{1}{p}\sum_{i,j=1}^{n} \vu_i \vv_j \mathbb E[\vx_i\tp \tilde \mQ \vx_j]
  =\frac{1}{p}\sum_{i=1}^{n}\sum_{\genfrac{}{}{0pt}{2}{j=1}{j\neq i}}^{n} \vu_i \vv_j \bar \vx\tp \tilde \mQ \bar \vx + \sum_{i=1}^{n} \vu_i \vv_i \delta \\
  &= \frac{1}{p} \bar \vx \tp \tilde \mQ \bar \vx \vu\tp \1 \1 \tp \vv + (\delta- \frac{1}{p}\bar \vx \tp \tilde \mQ \bar \vx) \vu \tp \vv = \frac1p \bar \vx \tp \tilde \mQ \bar \vx \vu\tp \mM_1 \vv\tp + \delta \vu \tp \vv + \mathcal O \left(\frac1p\right)
\end{align*}
since we have $\bar \vx \tp \tilde \mQ \bar \vx = \mathcal O(1)$ by Lemma~\ref{lem:StQ_bxtQbx_borne}; we introduced the matrix $\mM_1=\1 \1 \tp$. Therefore we have the following estimation:
\begin{align*}
  \frac1p \mathbb E \left[\mX \tp \mQ \mX \right]= \frac{\delta}{1+\delta}   \mI_n+\frac1p \left( \frac{1-\delta}{1+\delta} \right)\bar \vx \tp \tilde \mQ \bar \vx \mM_1  + \mathcal O_{\Vert\cdot \Vert} \left(\sqrt {\frac{\log p}{ p}}\right)
\end{align*}
where $\mA = \mB + \mathcal O_{\Vert\cdot \Vert} (\alpha(p))$ means that $\Vert \mA - \mB \Vert = \mathcal O (\alpha(p))$. Finally, since $\mR$ concentrates around its mean, we can then conclude:
\begin{align*}
    \mR = \frac{1}{z} \left( \mI_n - \frac{1}{p} \mX\tp \mQ \mX\right) = \frac{1}{z}\frac{1}{1+\delta}   \mI_n+\frac{\delta-1}{pz(\delta + 1)}\bar \vx \tp \tilde \mQ \bar \vx \mM_1 + \mathcal O_{\Vert\cdot \Vert} \left(\sqrt {\frac{\log p}{ p}}\right).
\end{align*}
\end{proof}

\section{Proof of Proposition~\ref{pro:lipschitz_norm_nn}}\label{proof:2}
\begin{proof} Since the Lipschitz constant of a composition of Lipschitz functions is bounded by the product of their Lipschitz constants, we consider the case $N=1$ and a linear activation function. In this case, the Lipschitz constant corresponds to the largest singular value of the weight matrix. We consider the following notations for the proof
	\begin{equation*}\begin{split}
		&\bar{\mW}_{t} = \mW_t - \eta \mE_t\text{ with } [\mE_t]_{i,j}\sim \mathcal{N}(0,1) \\
		&\mW_{t+1} = \bar{\mW}_{t} - \max(0, \bar{\sigma}_{1,t} - \sigma_*)\, \bar{\vu}_{1,t} \bar{\vv}_{1,t}^{\intercal}
	\end{split}\end{equation*}
where $\bar{\sigma}_{1,t} = \sigma_1(\bar{\mW}_{t})$, $\bar{\vu}_{1,t} = \vu_1(\bar{\mW}_{t})$ and $\bar{\vv}_{1,t} = \vv_1(\bar{\mW}_{t})$. The effect of spectral normalization is observed in the case where $\sigma_* > \bar{\sigma}_{1,t}$, otherwise the Lipschitz constant is bounded by $\sigma_*$. We therefore have
\begin{align}
	& \Vert \bar{\mW}_t \Vert_F^2 \leq \Vert \mW_t \Vert_F^2 + \eta^2 d_1d_0\label{eq:ids1}\\
	& \Vert \mW_{t+1} \Vert_F^2 = \Vert \bar{\mW}_t \Vert_F^2 + \sigma_*^2 - \bar{\sigma}_{1,t}^2\label{eq:ids2}
\end{align}
\begin{itemize}
	\item If $\Vert \mW_{t+1}\Vert_F \geq \Vert \mW_t \Vert_F$, we have by~\eqref{eq:ids1} and~\eqref{eq:ids2}
$$ \Vert \bar{\mW}_t \Vert_F^2 \leq \Vert \bar{\mW}_t \Vert_F^2 + \sigma_*^2 - \bar{\sigma}_{1,t}^2 + \eta^2 d_1 d_0\,\,\, \Rightarrow\,\,\, \Vert \bar{\mW}_t \Vert = \bar{\sigma}_{1,t} \leq \sqrt{\sigma_*^2 + \eta^2 d_1d_0} = \delta$$
And since $\Vert \mW_{t+1} \Vert \leq \Vert \bar{\mW}_t \Vert $, we have $\Vert \mW_{t+1} \Vert \leq \delta$.
	
	\item Otherwise, if there exits $\tau$ such that $\Vert \mW_{\tau+1}\Vert_F < \Vert \mW_{\tau} \Vert_F$, then for all $\varepsilon > 0$ there exists an iteration $\tau'\geq \tau$ such that $\Vert \mW_{\tau'} \Vert \leq \delta + \varepsilon$. Indeed, otherwise we denote $\varepsilon_t = \Vert \mW_t \Vert^2 - \delta^2$ and $\varepsilon_t > 0$ for all $t\geq \tau$. And if for all $t\geq \tau$, $\Vert \mW_{t+1}\Vert_F \leq \Vert \mW_t \Vert_F$, we have by~\eqref{eq:ids1} and~\eqref{eq:ids2}
	$$ \Vert \mW_t \Vert_F^2 - \Vert \mW_{t+1} \Vert_F^2 \geq \Vert \bar{\mW}_t \Vert^2 - \delta^2 \geq \Vert \mW_{t+1} \Vert^2 - \delta^2 = \varepsilon_{t+1}$$
	Integrating the above expression from $\tau$ to $T-1 \geq \tau$, we end up with
	$$ \Vert \mW_{\tau} \Vert_F^2 - \Vert \mW_{T} \Vert_F^2 \geq \sum_{t=\tau}^{T-1}\varepsilon_t\,\,\, \Rightarrow\,\,\, 0 \leq \Vert \mW_{T} \Vert_F^2 \leq \Vert \mW_{\tau} \Vert_F^2 - \sum_{t=\tau}^{T-1} \varepsilon_t, $$
therefore, when $T\to \infty$, $\varepsilon_t$ has to tend to $0$ otherwise the right hand-side of the last inequality will tend to $-\infty$ which is absurd.
\end{itemize}
\end{proof}

\end{document}